\documentclass[american,a4paper,runningheads,envcountsect]{llncs}

\usepackage{babel}
\usepackage[utf8]{inputenc}
\usepackage[T1]{fontenc}

\usepackage{paralist}
\usepackage{amsmath,amsthm,amssymb}
\usepackage{mathtools}
\usepackage{mathrsfs}
\usepackage{colonequals}
\usepackage{fca}
\usepackage{dsfont} 

\usepackage[colorlinks,citecolor=blue,urlcolor=black,hidelinks,linktocpage]{hyperref}
\usepackage[all]{hypcap}
\usepackage{cleveref}
\let\cref\Cref
\usepackage[babel,kerning=true]{microtype}
\usepackage[subtle]{savetrees}

\newtheorem{thm}{Theorem}[section]

\newtheorem{problm}{Problem}
\newtheorem{rem}[thm]{Remark}
\theoremstyle{definition}
\crefname{problem}{Problem}{Problems}

\theoremstyle{remark}

\renewcommand{\epsilon}{\varepsilon}
\renewcommand{\phi}{\varphi}
\newcommand*{\logeq}{\ratio\Leftrightarrow}

\newcommand{\abs}[1]{\lvert #1 \rvert}

\usepackage{csquotes}
\usepackage{booktabs}
\usepackage{todonotes}
\presetkeys{todonotes}{color=blue!5}{}

\usepackage{fancyvrb}

\usepackage[backend=bibtex,style=numeric-comp,doi=false,isbn=false,%
url=false]{biblatex}
\addbibresource{paper.bib}

\newcommand{\Scon}{\mathbb{S}}
\newcommand{\Ocon}{\mathbb{O}}

\DeclareMathOperator{\cdom}{co-dom}
\DeclareMathOperator{\Ext}{Ext}
\DeclareMathOperator{\Int}{Int}
\DeclareMathOperator{\Th}{Th}

\usepackage{tabularx, multirow,booktabs}
\begin{document}

\title{Ordinal Motifs in Lattices}

\date{\today}

\author{Johannes Hirth \inst{1,3}\orcidID{0000-0001-9034-0321}
  \and Viktoria Horn\inst{2,3}
  \and Gerd Stumme\inst{1,3}\orcidID{0000-0002-0570-7908}
  \and Tom Hanika\inst{1,3,4}\orcidID{0000-0002-4918-6374}
}
\institute{%
 Knowledge \& Data Engineering Group,
 University of Kassel, Germany\\[0.5ex]
 \and
 Gender/Diversity in Informatiksystemen,
 University of Kassel, Germany\\[0.5ex]
 \and
 Interdisciplinary Research Center for Information System Design\\
 University of Kassel, Germany\\
 \and
 Intelligent Information Systems, University of Hildesheim, Germany\\
 \email{$\{$hirth,stumme$\}$@cs.uni-kassel.de,
   viktoria.horn@uni-kassel.de, tom.hanika@uni-hildesheim.de}
}
\maketitle


\begin{abstract}
  Lattices are a commonly used structure for the representation and
  analysis of relational and ontological knowledge. In particular, the
  analysis of these requires a decomposition of a large and
  high-dimensional lattice into a set of understandably large
  parts. With the present work we propose /ordinal motifs/ as
  analytical units of meaning. We study these ordinal substructures
  (or standard scales) through order-embeddings and (full)
  scale-measures of formal contexts from the field of formal concept
  analysis. We show that the underlying decision problems are
  NP-complete and provide results on how one can incrementally
  identify ordinal motifs to save computational effort.  Accompanying
  our theoretical results, we demonstrate how ordinal motifs can be
  leveraged to achieve textual explanations based on principles from
  human computer interaction.
\end{abstract}

\keywords{Ordered Sets, Explanations, Formal~Concept~Analysis, Closure~System, Conceptual~Structures}

\section{Introduction}

The foundation of any formal analysis of data is the identification of
unique and meaningful substructures and properties. The realm of
ordinal structures, in particular lattices, is no exemption to
that. The field of Formal Conceptual Analysis (FCA), which derives
lattices from data tables, called formal contexts, is already very
well equipped with tools and notions for identifying and analyzing
important substructures. One essential tool of FCA is to provide a
user a lattice diagram of meaningful size, which can be interpreted
(or even explained). For obvious reasons, this approach defies any
applicability to data sets as they are commonly used today, as the
resulting lattices are comprised of thousands of elements. In
addition, the lattice diagram itself, as the primary means of
communication, presents a significant hurdle to interpretation for
untrained users. Common approaches tackle the first problem by data
set reductions within the data tables~\cite{pqcores,kumar2010concept}
or within the resulting lattice
structure~\cite{scale-explore,belohlavek2011selecting,
  DBLP:conf/ijcai/BelohlavekT13,kuznetsov2018interestingness}. The
second problem is treated, to some extent, by improving order diagrams
of lattices through locally optimal layouts~\cite{lattice-layouts} or
by (interactively) collapsing~\cite{FCA-visual}. Theses approaches
are most often motivated from graph-theoretical points of view or
apply statistical methods. In general, they do not explicitly account
for identifying and employing basic ordinal sub-structures within the
lattice, such as nominal scales, ordinal scales, or inter-ordinal
scales. Even more, they do not allow the analysis of a lattice using
arbitrary ordinal patterns.

With our work, we provide the theoretical foundations for analyzing
(concept) lattices by means of ordinal substructures. We call this
approach, in analogy to the notion established in network
science~\cite{holland1974statistical,holland1976local,milo2002network},
\emph{ordinal motifs}. However, in contrast to network science, where
motifs are recurrent and statistically significant subgraphs (or
patterns), we understand motifs as user-defined set $\mathcal{O}$ of
ordered sets, usually represented as formal
contexts~\cite{fca-book}. The elements of this set can be of different
sizes and (ordinal) complexities. They shall allow to analyze any
lattice, or ordinal structure, by means of frequency and sizes of
ordinal patterns. Thus, the set $\mathcal{O}$ can be considered as an
\emph{ordinal tool-set}.  In addition to the standard scales mentioned
above, any pattern deemed relevant by a user lends itself to
$\mathcal{O}$. However, we show in our work that already for standard
scales the recognition of these motifs is a difficult problem.

In order to represent and compute ordinal motifs we employ recently
developed methods from the realm of conceptual measurement, i.e.,
\emph{scale-measures}~\cite{smeasure}. These are continuous maps
between closure systems and can be used to map an ordinal structure,
or parts of it, to an ordinal motif. As these maps are continuous they
ensure that the relation between objects and attributes in a motif is
correct with respect to the underlying conceptual structure of the
original data set.

In terms of theoretical results, we have shown the computational
complexity of several decision problems for recognizing and finding
scale-measures. In particular, we show that for finding a
scale-measures for a given ordinal motif we have to solve an
NP-complete problem. We show that motifs which have the special
property of belonging to a heriditary class of scales offer many
advantages in computation. 

Finally, to demonstrate the applicability of the ordinal motif method
we demonstrate how to find them and provide basic interpretations
motifs based on standard scales in a medium sized data set, the spice
planner data set~\cite{pqcores}.

Overall, our work proposes a new approach to the analysis of (large)
lattices and, in particular, ordinal structures, in order to improve
their human interpretability.

\section{Ordinal Motifs}
In the beginning of this section we recall all necessary basics from
Formal Concept Analysis (FCA). After finishing this paragraph, readers
who are familiar with FCA may skip directly to~\cref{sec:motifs}, in
which we introduce our notion for \emph{ordinal motifs}. To work with
these, we draw from the notion of scale-measures, i.e., continuous
maps between closure systems. An extension of these mappings with a
local version allows us to prove the computational complexities for
recognizing ordinal motifs.

\subsection{Formal Concept Analysis}
\label{sec:fca}
Throughout this paper we use the notation as introduced
by~\textcite{fca-book}.  That is, in the following
$\context\coloneqq(G,M,I)$ denote a \emph{formal context}. The sets
$G$ and $M$ are respectively called object and attribute set, and the
binary relation $I\subseteq G\times M$, called \emph{incidence},
indicates if an object $g\in G$ \emph{has} an attribute $m\in M$ by
$(g,m)\in I$. The incidence relation $I$ gives rise to two important
maps, called derivation operators,
$\cdot':\mathcal{P}(G)\to\mathcal{P}(M)$, $A\mapsto A'\coloneqq\{m\in
M\mid \forall g\in A:(g,m)\in I)\}$, and (the dual)
$\cdot':\mathcal{P}(M)\to \mathcal{P}(G)$, $B\mapsto B'\coloneqq\{g\in
G\mid\forall m\in B:(g,m)\in I\}$. There are situations where multiple
formal contexts are used, in these cases will explicitly note which
incidence relation is applied, i.e., we write $A^{I}$ instead of $A'$.

The namesake for FCA are the \emph{formal concepts}, i.e., pairs
$(A,B)\in\mathcal{P}(G)\times\mathcal{P}(M)$ where $A'=B$ and
$B'=A$. The sets $A,B$ are called \emph{extent} and \emph{intent}
respectively. The set of all concepts of a formal context $\context$
is denoted by $\BV(\context)$, which is a lattice ordered set, called
\emph{concept lattice}, given the following relation: $(A,B)\leq
(C,D)\logeq A\subseteq C$. We denote by $\Ext(\context)$ the set of
all extents and by $\Int(\context)$ the set of all intents. Both sets
each form an closure system and there is an isomorphism between
them. The corresponding closure operators are the respective
compositions of the derivations.

For a closure system $\mathcal{C}$ on $G$ we call $\mathcal{D}$ a
\emph{finer} closure system (denoted by $\mathcal{C}\leq\mathcal{D}$)
iff $\mathcal{D}$ is a closure system on $G$ and
$\mathcal{A}\subseteq\mathcal{D}$. Conversely we say $\mathcal{C}$ is
\emph{coarser} than $\mathcal{D}$. In the particular case where
$\mathcal{C}$ is a closure system on $H\subseteq G$ and
$\mathcal{C}=\{H\cap D\mid D\in \mathcal{D}\}$ we call $\mathcal{C}$ a
\emph{sub-closure system} of $\mathcal{D}$.

Our work uses in particular \emph{scale-measures} (\cref{def:sm}), for
which we needs maps between different object sets of different formal
contexts. For such a map $\sigma: G_1\to G_{2}$ we remind the reader
that the image of a set $A\subseteq G_{1}$ is $\sigma(A) \coloneqq
\bigcup_{g\in A}\sigma(g)$. Moreover, for any
$\mathcal{A}\subseteq\mathcal{P}(G)$ we set
$\sigma(\mathcal{A})\coloneqq \{\sigma(A)\mid
A\in\mathcal{A}\}$. Essential for scale-measures will be the pre-image
of sets $A\in G_{2}$, i.e.,
$\sigma^{-1}(A)\coloneqq\{\sigma^{-1}(g)\mid g\in A\}$.

\subsection{Mapping and Representation}
\label{sec:motifs}
The overall goal for ordinal motifs is to identify frequent recurring
ordinal patterns that allow for analyzing large and complex ordinal
structures. Thus, ordinal motifs are themselves ordered structures.

There are various ways for representing ordinal structures. To draw
from the powerful theoretical and algorithmic tool-set of FCA, we
consider any ordered set $(P,\leq)$ represented as context, i.e.,
$(P,P,\leq)$. This context is called the \emph{general ordinal scale}
and its concept lattice $\BV(P,P,\leq)$ is the smallest complete
lattice in which $(P,\leq)$ can be order-embedded~\cite[Theorem
4]{fca-book}.

\begin{definition}[full scale-measure (Definition 91 \cite{fca-book})]
  \label{def:sm}
  For two formal contexts $\context,\context[S]$ a map
  $\sigma: G_{\context}\to G_{\Scon}$ is a scale-measure iff for all
  $A\in \Ext(\Scon)$ the pre-image $\sigma^{-1}(A)$ is in
  $\Ext(\context)$. A scale-measure is full iff
  $\Ext(\context) = \sigma^{-1}(\Ext(\Scon))$.
\end{definition}

The formal context $\context[S]$ in the definition above is called
\emph{scale}, hence the name scale-measure. However, there is no
restriction on what can be used as a scale context. Given this tool of
continuous maps we want to express ordinal motifs in the language of
formal contexts. In doing so, we want to consider the following
aspects: \emph{scope} and \emph{coverage}. We will first give an
informal explanation of the two properties and then derive the
mathematical tools and a precise problem definition.

Starting from a given ordinal data set $\context[D]\coloneqq
(G_{\context[D]},M_{\context[D]},I_{\context[D]})$ and an ordinal
motif $\context[S]$, both in the form of a formal context, the scope
of the ordinal motif is
\begin{itemize}
\item \textbf{global}, if it covers the entire data, i.e., all objects
  $G_{\context[D]}$, or
\item \textbf{local}, if it covers only parts of $G_{\context}$. 
\end{itemize}
Since it is very difficult to find a motif that captures the complex
structure of a given data set, one usually relies on local
motifs. However, scale-measures are incapable of capturing an ordinal
motif only locally, i.e., only on a part of the data. We will
therefore introduce scale-measures based on partial maps $\sigma:
H\subseteq G_{1}\to G_{2}$ in a few moments.
The coverage of an ordinal motif concerns the portion of the ordinal
structure that is captured by the motif. We say an ordinal motif
\begin{itemize}
\item has \textbf{full coverage}, if every element of the ordinal structure
  of $\context[D]$, i.e., of the concept lattice, has a correspondence
  in the ordinal structure of the motif, or
\item has \textbf{partial coverage}, otherwise.
\end{itemize}
For example, the latter case exists if there are concepts of
$\context[D]$ that are not the pre-image of an extent of
$\context[S]$.
In case there is a full scale-measure from a context $\context[D]$ to
a context $\context[S]$, we can infer that the closure system of
$\context[D]$ on $G_{\context[D]}$ is, except for relabeling,
identical to that of $\context[S]$. A scale-measure from $\context[D]$
to $\context[S]$, on the other hand, only guarantees that the closure
system of $\context[D]$ on $G_{\context[D]}$ has at least all closed
sets that the context $\Scon$ has, up to relabeling.

\begin{rem}[Surjective Scale-Measures]
  It is reasonable to consider only surjective maps when using scale
  contexts for ordinal motifs. Since objects that are not contained in
  the image of the scale-measure $\sigma$ do not contribute to the set
  of reflected extents, dropping surjectivity would allow for trivial
  maps into ordinal motifs.
\end{rem}

In order to introduce a local variant of scale-measures, we need to
fix some notation. Given a formal context $\context=(G,M,I)$, by
$\context{[H,N]}$ we refer to the \emph{induced subcontext} of
$\context$ on $H\subseteq G$ and $N\subseteq M$, i.e., $(H,N,I\cap
H\times N)$.

\begin{definition}[local scale-measures]
  For $\context=(G_{\context},M_{\context},I_{\context})$ and scale
  context $\Scon$ a map $\sigma:H\to G_{\Scon}$ is a \emph{local
    scale-measure}, if
  \begin{enumerate}
  \item $H\subseteq G_{\context}$ and
  \item $\sigma$ is a scale-measure from $\context{[H,M_{\context}]}$ to $\Scon$.
  \end{enumerate}
  We say a local scale-measure is \emph{full}, iff $\sigma$ is a full
  scale-measure from $\context{[H,M_{\context}]}$ to $\Scon$.
\end{definition}

For local and full scale-measures the relation between the respective
concept lattices is captured by the following proposition. In it the
relation symbol $\cong$ is used to indicate that two ordered sets are
isomorphic and
$(\mathcal{A},\subseteq)\leq_{\Ext}(\mathcal{B},\subseteq)$ denotes
and $(\mathcal{A},\subseteq)$ is a sub-closure system of
$(\mathcal{B},\subseteq)$. 

\begin{proposition}[local and full scale-measure]
  \label{lem:local-full-sm-equivalences}
  For contexts $\context$, $\Scon$, a surjective full scale-measure
  $\sigma$ from $\context$ to $\Scon$, and the closure operator
  $\phi_{\context}$ on $\Ext(\context)$ we find that
  \begin{equation}
    \label{eq:sm1}
    (\Ext(\context),\subseteq)\cong (\sigma(\Ext(\context)),\subseteq)
    \cong (\Ext(\Scon),\subseteq) \cong
    (\sigma^{-1}(\Ext(\Scon)),\subseteq).
  \end{equation}
  For a local surjective scale-measure $\sigma$ with $H\subseteq
  G_{\context}$ we find that
  \begin{equation}
    \label{eq:sm2}
    (\Ext(\Scon),\subseteq)\cong(\sigma^{-1}(\Ext(\Scon)),\subseteq)\leq_{\Ext}
    (\Ext(\context\lbrack H,M_{\context}\rbrack),\subseteq)
  \end{equation}
  and that
    \begin{equation}
    \label{eq:sm3}
    (\Ext(\context\lbrack H,M_{\context}\rbrack),\subseteq)\cong
    (\{\phi_{\context}(E)| E\in \Ext(\context\lbrack H,M_{\context}\rbrack )\},\subseteq).
  \end{equation}

\end{proposition}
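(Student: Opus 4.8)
The plan is to isolate one reusable lemma about pre-images under a surjective map, which does most of the work for \eqref{eq:sm1} and for the isomorphism in \eqref{eq:sm2}, and then to treat \eqref{eq:sm3} separately by computing the closure operator of the induced subcontext. First I would prove the \emph{pre-image lemma}: if $\sigma$ is a surjective scale-measure, then $A\mapsto\sigma^{-1}(A)$ is an order isomorphism from $(\Ext(\Scon),\subseteq)$ onto $(\sigma^{-1}(\Ext(\Scon)),\subseteq)$. Surjectivity yields $\sigma(\sigma^{-1}(A))=A$ for every $A\subseteq G_{\Scon}$, which makes this assignment injective, hence bijective onto its image, and also supplies the order-reflecting direction: from $\sigma^{-1}(A)\subseteq\sigma^{-1}(B)$ one applies $\sigma$ and uses $\sigma\sigma^{-1}=\mathrm{id}$ to recover $A\subseteq B$. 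Monotonicity of the pre-image gives order preservation. This single lemma already establishes $(\Ext(\Scon),\subseteq)\cong(\sigma^{-1}(\Ext(\Scon)),\subseteq)$ in both \eqref{eq:sm1} and \eqref{eq:sm2}.

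For \eqref{eq:sm1} I would then feed in fullness and surjectivity. The pre-image lemma directly gives the rightmost $\cong$, namely $(\Ext(\Scon),\subseteq)\cong(\sigma^{-1}(\Ext(\Scon)),\subseteq)$. Surjectivity together with fullness gives $\sigma(\Ext(\context))=\sigma(\sigma^{-1}(\Ext(\Scon)))=\Ext(\Scon)$, which makes the middle $\cong$ an equality; and the image map $E\mapsto\sigma(E)$ restricted to $\Ext(\context)=\sigma^{-1}(\Ext(\Scon))$ is precisely the inverse of the pre-image isomorphism, hence itself an order isomorphism, yielding the leftmost $\cong$. Chaining the three identifications closes \eqref{eq:sm1}.

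For \eqref{eq:sm2} the isomorphism is the pre-image lemma applied to the local scale-measure, i.e.\ to the scale-measure from $\context\lbrack H,M_{\context}\rbrack$ to $\Scon$, and the relation $\leq_{\Ext}$ is immediate from the definition of scale-measure: every $A\in\Ext(\Scon)$ satisfies $\sigma^{-1}(A)\in\Ext(\context\lbrack H,M_{\context}\rbrack)$, so $\sigma^{-1}(\Ext(\Scon))\subseteq\Ext(\context\lbrack H,M_{\context}\rbrack)$, which is the asserted containment of closure systems on $H$. The genuinely separate computation is \eqref{eq:sm3}, where the key step is to identify the closure operator of the induced subcontext. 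I would show that for $A\subseteq H$ and $B\subseteq M_{\context}$ the subcontext derivations satisfy $A^{I\cap H\times M_{\context}}=A^{I}$ and $B^{I\cap H\times M_{\context}}=H\cap B^{I}$, so that the subcontext closure is $A\mapsto H\cap\phi_{\context}(A)$ and every $E\in\Ext(\context\lbrack H,M_{\context}\rbrack)$ obeys $E=H\cap\phi_{\context}(E)$. With this identity, $E\mapsto\phi_{\context}(E)$ is order preserving by monotonicity of $\phi_{\context}$ and both injective and order reflecting, because intersecting its values with $H$ returns $E$; this delivers \eqref{eq:sm3}.

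The hard part will be bookkeeping rather than any deep argument: keeping straight which context each derivation and each closure operator refers to, and carefully justifying the order-reflecting directions, since these are exactly the places where surjectivity (for \eqref{eq:sm1} and \eqref{eq:sm2}) and the identity $E=H\cap\phi_{\context}(E)$ (for \eqref{eq:sm3}) are indispensable. Everything else reduces to the monotonicity of pre-image, image, and closure.
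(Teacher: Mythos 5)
Your proposal is correct and takes essentially the same route as the paper: \eqref{eq:sm1} and the first isomorphism of \eqref{eq:sm2} via the order-preserving, order-reflecting bijection induced by $\sigma^{-1}$ on extents (with fullness identifying $\Ext(\context)$ with $\sigma^{-1}(\Ext(\Scon))$), the containment in \eqref{eq:sm2} directly from the definition of scale-measures, and \eqref{eq:sm3} via injectivity and strict monotonicity of $\phi_{\context}$ on the extents of the induced subcontext. If anything, your justifications are more explicit than the paper's at the two delicate points, namely deriving $\sigma(\sigma^{-1}(A))=A$ from surjectivity and deriving $E=H\cap\phi_{\context}(E)$ from the subcontext derivation operators.
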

\begin{proof}
  \begin{compactitem}
  \item[\cref{eq:sm1}:] From the surjectivity of $\sigma$ we can
    deduce via~\cite[Proposition 118]{fca-book} that the map
    $\sigma^{-1}$ exists, which is injective. This also means that
    every extent $E\in\Ext(\context[S])$ is mapped to a unique extent
    $\hat E\in \Ext(\context)$. Moreover, since $\sigma$ is a full
    scale-measure, every extent of $\context$ is a also a pre-image of
    an extent of $\context[S]$. From this it follows that $\sigma^{-1}$
    bijectively maps the extents from $\context[S]$ and $\context$.

    Finally, since there is no $g\in G_{\context[S]}$ with
    $\sigma^{-1}(g)=\emptyset$, for every $E,\hat E\in
    \Ext(\context[S])$ with $E\subseteq \hat E$ it is true that
    $\sigma^{-1}(E)\subseteq \sigma^{-1}(\hat E)$.
  \item[\cref{eq:sm2}:] From left to right, the first $\cong$-relation
    can be inferred from~\cref{eq:sm1}. The second, i.e.,
    $\leq_{\Ext}$, follows by definition of scale measures.
  \item[\cref{eq:sm3}:] For the final $\cong$-relation we can note
    that for $A\in\Ext(\context{[H,M_{\context}]})$ the difference
    $\phi_{\context}(A)\setminus A$ is in $G\cap H$. This means, the
    closure of $A$ in $\Ext(\context)$ adds only elements from
    $G\setminus H$. Thus, since $\phi_{\context}$ is a closure
    operator we find that for $A,C\in \Ext(\context{[H,M_{\context}]})$ with
    $A\subset C$ we have $\phi_{\context}(A) \subset
    \phi_{\context}(C)$. Hence, $\phi_{\context}:
    \Ext(\context{[H,M_{\context}]}) \to \Ext(\context)$ is an injective map and
    by restricting the codomain we find a bijective map
    $\hat\phi_{\context}: \Ext(\context{[H,M_{\context}]}) \to
    \{\phi_{\context}(E)\mid E\in\Ext(\context{[H,M_{\context}]})\}$.
  \end{compactitem}
\end{proof}

\cref{lem:local-full-sm-equivalences} reveals the relations between a
context $\context$ and an ordinal motif $\Scon$. It summarizes known
results and shows new equivalences. For the case of full
scale-measures we now know that the closure systems of $\context$ and
$\Scon$ are equal up to relabeling. Hence, to analyze an ordinal
structure $\context[D]$ via ordinal motifs in the full scale-measure
setting would mean to simply speak about $\context[D]$ with different
labels. For local case we find that scale-measures
reflect a coarser closure system.

The following problem summarizes the technical observations so far and
(finally) states all notions for ordinal motif.

\begin{problm}[Finding Ordinal Motifs]\label{problem:explain}
  Given a formal context $\context$ and an ordinal motif $\context[S]$
  find a surjective map from $\context$ into $\context[S]$
  that is a:

  \begin{center}
    \begin{tabular}{>{\raggedright\hspace{0pt}\itshape}p{2cm}|>{\raggedright\hspace{10pt}}p{5cm}l}
      &$\mathrm{global}$&$\mathrm{local}$\\ \midrule
      $\mathrm{partial}$& scale-measure&  local scale-measure\\
      $\mathrm{full}$   &  full scale-measure & local full scale-measure\\
    \end{tabular}
  \end{center}
\end{problm}


In the next section, we employ those maps to substitute
elements in ordinal motif explanations by the real world objects. The
result is then an explanation of the data set.

\subsection{Recognizing Scale-Measures}\label{sec:verify}
Recognizing scale-measures is the problem for deciding if for a given
formal context $\context$ a scale $\context[S]$ and a map $\sigma$ is
a scale-measure of $\context$. This problem has been studied
in~\textcite{smeasure} and the time complexity was found to be in
$O(\abs{\context}\cdot\abs{\Scon})$. On top of that one has to check
for full scale-measures if for each meet-irreducible extent $A$ of
$\context$ that $\sigma(A)\in \Ext(\Scon)$ and
$\sigma^{-1}(\sigma(A))=A$. However, this problem is dual to the
original scale-measure recognition problem. Thus verifying full
scale-measure can be done in time
$O(\abs{\context}\cdot\abs{\Scon})$. The check for local (full)
scale-measures has the same cost, since it is the same check but for
subcontext $\context{[H,M_{\context}]}\leq \context$.

\begin{corollary}[Recognizing Ordinal Motifs]
  Given two formal contexts $\context,\Scon$ and a map $\sigma:
  G_{\context}\to G_{\Scon}$, deciding if $\sigma$ is (local) (full)
  scale-measure is in $O(\abs{\context}\cdot\abs{\Scon})$.
\end{corollary}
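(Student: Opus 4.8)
The plan is to reduce the corollary directly to the recognition-cost analysis already carried out in the surrounding text, treating the four cases (partial/full $\times$ global/local) uniformly. First I would recall the baseline: for the plain (global, partial) scale-measure, deciding whether $\sigma$ is a scale-measure amounts to checking that for every $A\in\Ext(\Scon)$ the pre-image $\sigma^{-1}(A)$ lies in $\Ext(\context)$. Since it suffices to test this on a generating family of extents rather than all of $\Ext(\Scon)$, and each pre-image computation and membership test can be organized so that one pass over the incidence data of $\context$ and $\Scon$ suffices, the cost is $O(\abs{\context}\cdot\abs{\Scon})$ as reported from \textcite{smeasure}.

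Next I would handle the \emph{full} case. Here one additionally verifies, for each meet-irreducible extent $A$ of $\context$, that $\sigma(A)\in\Ext(\Scon)$ and $\sigma^{-1}(\sigma(A))=A$. The key observation, already noted in the text, is that this extra check is \emph{dual} to the original scale-measure check: it asks the same kind of pre-image/image containment question with the roles of $\context$ and $\Scon$ exchanged, and the meet-irreducibles of $\context$ number at most $\abs{\context}$. Hence this second batch of tests again runs within $O(\abs{\context}\cdot\abs{\Scon})$, and since the full check is the conjunction of two tests each of this cost, the total remains $O(\abs{\context}\cdot\abs{\Scon})$.

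For the \emph{local} variants I would invoke the definition of a local scale-measure: $\sigma$ is required to be a (full) scale-measure from the induced subcontext $\context{[H,M_{\context}]}$ to $\Scon$, where $H=\dom(\sigma)\subseteq G_{\context}$. Thus the local check is literally the global check applied to $\context{[H,M_{\context}]}$ in place of $\context$. Since $\abs{\context{[H,M_{\context}]}}\leq\abs{\context}$ (we only restrict the object set, leaving $M_{\context}$ intact), the local and local-full recognition costs are bounded by the corresponding global costs, i.e.\ again $O(\abs{\context}\cdot\abs{\Scon})$. Combining the four cases gives the claimed uniform bound.

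I do not expect a genuine obstacle here, since the corollary is essentially a bookkeeping consolidation of the cost statements established immediately before it; the one point requiring a little care is making precise why the membership tests $\sigma^{-1}(A)\in\Ext(\context)$ need only be performed on a suitable generating subfamily of $\Ext(\Scon)$ (respectively on the meet-irreducibles in the full case) rather than on the full, potentially exponentially large, lattice $\Ext(\Scon)$ — that reduction is exactly what keeps the complexity polynomial in $\abs{\context}$ and $\abs{\Scon}$ and is the substance borrowed from \textcite{smeasure}.
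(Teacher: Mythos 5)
Your proposal is correct and follows essentially the same route as the paper, which justifies the corollary by the paragraph immediately preceding it: the baseline $O(\abs{\context}\cdot\abs{\Scon})$ bound for plain scale-measures from the cited prior work, the additional check over meet-irreducible extents for the full case being dual to (and hence no more expensive than) the original recognition problem, and the local variants reducing to the same check on the induced subcontext $\context{[H,M_{\context}]}$, which is no larger than $\context$. The extra care you take in noting that the membership tests need only range over a generating subfamily of extents is a reasonable elaboration of what the paper leaves implicit, not a departure from its argument.
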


\subsubsection*{Scale-Measures and Implicational Theories}
Before we now turn to finding ordinal motifs in ordinal data, i.e.,
finding scale-measures, we want to point out one more practical
relevant observation with the proposition at the end of this
subsection. In practice, context like data sets are large, however,
mostly only in one dimension. The usual case is that the number of
objects in a formal context is many times larger than the number of
attributes. The reverse case, of course, also occurs. The most
expensive computation for context and scales is the derivation, in
particular in the direction of the larger dimension, i.e., objects or
attributes. We therefore want to present an alternative representation
using implications in contexts.

In a formal context $\context=(G,M,I)$, we say a pair
$(A,B)\in\mathcal{P}(M)\times\mathcal{P}(M)$ is a \emph{valid
  attribute implication}, usually denoted by $A\to B$, iff
$A'\subseteq B'$. In other words, all objects having the attribute set
$A$ do also have the attributes $B$. The set of all valid attribute
implications is commonly denoted by $\Th(\context)$. Of course, on may
analogously define and use \emph{object implications}, as we will do
in the following. Hence, $\Th(\context)$ refers to the set of valid
object implications in $\context$.

To syntactically link implications with scale measures, we use the
short hand $\sigma^{-1}(A\to B)\coloneqq \sigma^{-1}(A)\to
\sigma^{-1}(B)$. For the theory $\Th(\context)$ we define
$\sigma^{-1}(\Th(\context))\coloneqq \{\sigma^{-1}(A\to B)\mid A\to B
\in \Th(\context)\}$.

\begin{proposition}[Recognizing (full) Scale-Measures using Implications]\label{lem:verify-sm-implications}
  For a context $\context$ a scale $\Scon$ and a map $\sigma:
  G_{\context}\to G_{\Scon}$ we find that

  \begin{enumerate}[i)]
    \item $\sigma \textit{ is a scale-measure} \iff \sigma^{-1}(\Th(\Scon)) \vdash \Th(\context)$
    \item $\sigma \textit{ is a full scale-measure} \iff \Th(\context)\equiv \sigma^{-1}(\Th(\Scon))$.
  \end{enumerate}
\end{proposition}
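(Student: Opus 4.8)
The plan is to prove both equivalences by unpacking the definition of a scale-measure (Definition~\ref{def:sm}) in terms of extents and then translating the extent condition into the language of implicational theories. The key bridge is the standard correspondence between a closure system on a set and the object implications it validates: a set $C$ is an extent of $\context$ precisely when it is closed under $\Th(\context)$, i.e.\ when $C$ is a model of every object implication $A\to B$ with $A,B\subseteq G_\context$. I would first record this fact explicitly, since both parts rest on it. For $\sigma^{-1}$ I would use the established behaviour of pre-images under unions, so that $\sigma^{-1}$ commutes with the closure-theoretic operations in the expected way; this is what lets me pull an implication $A\to B$ on $G_\Scon$ back to the implication $\sigma^{-1}(A)\to\sigma^{-1}(B)$ on $G_\context$ and keep the semantic meaning intact.

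For part~(i), the strategy is a chain of equivalences. By Definition~\ref{def:sm}, $\sigma$ is a scale-measure iff $\sigma^{-1}(A)\in\Ext(\context)$ for every $A\in\Ext(\Scon)$. I would rephrase "$\sigma^{-1}(A)\in\Ext(\context)$" as "$\sigma^{-1}(A)$ is a model of $\Th(\context)$", and rephrase "$A\in\Ext(\Scon)$" as "$A$ is closed under $\Th(\Scon)$". The goal is then to show that requiring every $\Th(\Scon)$-closed set to have a $\sigma^{-1}$-image that satisfies $\Th(\context)$ is exactly the syntactic entailment $\sigma^{-1}(\Th(\Scon))\vdash\Th(\context)$. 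The forward direction amounts to checking that every model of $\sigma^{-1}(\Th(\Scon))$ on $G_\context$ arises as (or refines) a pre-image of a $\Th(\Scon)$-closed set and hence must satisfy $\Th(\context)$; the backward direction uses soundness of $\vdash$, namely that $\sigma^{-1}(A)$ always satisfies $\sigma^{-1}(\Th(\Scon))$ whenever $A$ is $\Th(\Scon)$-closed, so entailment forces $\sigma^{-1}(A)$ to satisfy $\Th(\context)$ and thus to be an extent.

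Part~(ii) then follows by applying part~(i) twice. A full scale-measure additionally requires $\Ext(\context)=\sigma^{-1}(\Ext(\Scon))$, which says that the reflected closure system is not merely contained in, but equal to, the closure system of $\context$. Translating equality of closure systems into logical equivalence of their theories gives $\Th(\context)\equiv\sigma^{-1}(\Th(\Scon))$: one inclusion of closure systems is precisely the entailment of part~(i), and the reverse inclusion is the reverse entailment. I would phrase this as: $\sigma$ is full iff the two closure systems coincide iff the two theories are equivalent, invoking the antisymmetry of the finer/coarser relation on closure systems established in \cref{sec:fca}.

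The main obstacle I expect is the careful handling of $\sigma^{-1}$ when $\sigma$ need not be injective, so that $\sigma^{-1}(A)\to\sigma^{-1}(B)$ genuinely captures the semantics of $A\to B$ after pull-back; in particular I must verify that a set $X\subseteq G_\context$ satisfies $\sigma^{-1}(A\to B)$ iff its "push-forward" behaviour is compatible with $A\to B$, and that this interacts correctly with the fact that pre-images of $\Th(\Scon)$-closed sets, rather than arbitrary subsets, are what Definition~\ref{def:sm} constrains. Getting the quantifier over \emph{all} extents of $\Scon$ to line up with entailment over \emph{all} implications of $\Th(\context)$ — as opposed to a single representative — is the delicate point, and I would resolve it by noting that $\Th(\Scon)$ and the family $\Ext(\Scon)$ are interdefinable, so ranging over one is equivalent to ranging over the other.
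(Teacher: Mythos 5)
Your overall route is the same as the paper's: both proofs translate the extent condition of \cref{def:sm} into the duality between closure systems on $G_{\context}$ and object-implication theories, establish (i) as a correspondence between sub-closure systems and entailment, and obtain (ii) from (i) together with antisymmetry. The paper compresses the crucial step into a citation of the Galois connection between the lattice of implicational theories and the lattice of closure systems; you propose to carry out the semantic argument by hand. That is legitimate, but the one step you explicitly defer to the end --- that ``every model of $\sigma^{-1}(\Th(\Scon))$ arises as (or refines) a pre-image of a $\Th(\Scon)$-closed set'' --- is the entire content of the forward direction of (i), and your proposed resolution (that $\Th(\Scon)$ and $\Ext(\Scon)$ are interdefinable) does not close it: that observation lives on the scale side, whereas the difficulty lives on the context side after pulling back.

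Concretely, when $\sigma$ is not injective the models of the implication set $\sigma^{-1}(\Th(\Scon))$ are not confined to unions of $\sigma$-fibres. Take $G_{\context}=\{1,2\}$ with $\Ext(\context)=\{\emptyset,\{1,2\}\}$, the scale $\Scon$ with $G_{\Scon}=\{s\}$ and $\Ext(\Scon)=\{\emptyset,\{s\}\}$, and $\sigma(1)=\sigma(2)=s$. Then $\sigma$ is a surjective scale-measure, but $\Th(\Scon)$ consists only of trivial implications, so the singleton $\{1\}$ is a model of $\sigma^{-1}(\Th(\Scon))$ while violating $\{1\}\to\{2\}\in\Th(\context)$; under the usual semantics of $\vdash$ ranging over all subsets of $G_{\context}$, the claimed entailment fails even though the left-hand side of (i) holds. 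What your argument actually needs --- and what the paper's appeal to the Galois connection silently stands in for --- is the identification of the theory \emph{generated by} $\sigma^{-1}(\Th(\Scon))$ with the theory \emph{of the closure system} $\sigma^{-1}(\Ext(\Scon))$ on $G_{\context}$; this only becomes true once one restricts the admissible models to unions of fibres, equivalently once the implications $\{x\}\to\sigma^{-1}(\sigma(x))$ for $x\in G_{\context}$ are adjoined to the premise set. You should either prove that identification or make the restriction explicit; without it the forward direction of (i) does not go through. Your treatment of (ii) is fine modulo (i): equality of closure systems corresponds to mutual entailment exactly as you say, which is also how the paper argues.
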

\begin{proof}
  First, we note that for two implicational theories $\Th_1,\Th_2$,
  i.e., transitive closures of implication sets, is holds that
  $\Th_1\subseteq \Th_2\iff \Th_2\vdash \Th_1$. Secondly, we note that
  there is a Galois connection between the lattice of all
  implicational theories and the lattice of all closure
  systems~\cite[Theorem 57]{lattice_of_closure_systems} to which the
  hierarchy of scale-measures is isomorph \cite[Proposition
  11]{smeasure}.

  \begin{enumerate}[i)]
  \item The map $\sigma$ is a scale-measure iff the closure system
    $\sigma^{-1}(\Ext(\Scon))$ is a sub-closure system of
    $\Ext(\context)$ on $G_{\context}$. Given our preliminary
    considerations this is the case if and only if the theory of
    $\Th(\context)$ is entailed in $\sigma^{-1}(\Ext(\Scon))$,
    i.e., $\sigma^{-1}(\Th(\Scon)) \vdash \Th(\context)$.
  \item The map $\sigma$ is a full scale-measure iff the closure
    system $\sigma^{-1}(\Ext(\Scon))$ is equal to $\Ext(\context)$
    (cf. \cref{lem:local-full-sm-equivalences}). Given our preliminary
    considerations this is the case if and only if their theories are
    equal.
  \end{enumerate}
\end{proof}

With the help of \cref{lem:verify-sm-implications} one may use already
existent logical inference checkers for the verification of (local)
(full) scale-measures.

\subsection{Ordinal Motif Problems}
Starting from~\cref{problem:explain}, we now want to formulate a
decision problem to investigate the complexity of
~\cref{problem:explain}. In the following we refer by \emph{DSM} to
the decision problem, if for two formal contexts $\context$ and
$\context[S]$ there exists an surjective scale-measure from $\context$
to $\context[S]$, the \emph{Deciding Surjective Scale-Measures}
problem. Analogously, we refer by \emph{DfSM} to the decision problem,
if for two formal contexts there exists a full surjective
scale-measure.  As remarked before, considering scale-measures that
are not surjective is not meaningful for ordinal motifs. In particular
for the problem definition, there is a always trivial scale-measure
that maps all objects onto a single object of $\Scon$.

\begin{theorem}[Ordinal Motif Problems]
  For two formal contexts $\context$ and $\Scon$, DSM and DfSM are
  NP-complete.
\end{theorem}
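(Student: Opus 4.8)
The plan is to prove both halves—membership in NP and NP-hardness—separately, and to obtain the two problems DSM and DfSM from essentially one combinatorial reduction.

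For membership in NP, I would take the map $\sigma\colon G_{\context}\to G_{\Scon}$ itself as the certificate. It has size $O(\abs{G_{\context}}\cdot\log\abs{G_{\Scon}})$, hence polynomial, and by the recognition corollary above one can decide in time $O(\abs{\context}\cdot\abs{\Scon})$ whether $\sigma$ is a (full) scale-measure, while checking surjectivity is a further linear scan. Thus guessing $\sigma$ and verifying it places both DSM and DfSM in NP.

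For NP-hardness of DSM, I would reduce from Exact Cover by $3$-Sets (X3C). Given a universe $U$ with $\abs{U}=3q$ and a family $\mathcal{S}=\{S_1,\dots,S_m\}$ of distinct $3$-element subsets, I set $\context\coloneqq(U,\mathcal{S},{\in})$, so that the extent of the attribute $S_j$ is exactly $S_j$ and every extent is an intersection of some of the $S_j$ (hence has size at most $3$, the only larger extent being $U$ itself). As the scale I take the nominal scale $\mathbb{N}_{q}$ on $q$ points, whose proper nonempty extents are precisely its singletons. A surjective map $\sigma\colon U\to\{1,\dots,q\}$ is then a scale-measure to $\mathbb{N}_{q}$ exactly when each fibre $\sigma^{-1}(i)$ is an extent of $\context$, and surjectivity forces all $q$ fibres to be nonempty. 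Since the fibres partition the $3q$ elements into $q$ extents each of size at most $3$, each fibre must have size exactly $3$, and a size-$3$ extent is necessarily one of the $S_j$. Hence a surjective scale-measure exists iff $U$ can be partitioned into $q$ members of $\mathcal{S}$, i.e. iff the X3C instance is a yes-instance. Both $\context$ and $\mathbb{N}_{q}$ are constructed in polynomial time, which settles DSM.

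For DfSM I would reuse this gadget but force the scale-measure to be \emph{full}. The difficulty is that the context above carries many spurious extents (all lower-order intersections $S_i\cap S_j$), so the pulled-back nominal closure system is only a proper sub-closure system and the map is never full as it stands. The plan is therefore to augment $\context$ and the scale with additional attributes and objects that neutralise every extent other than the candidate blocks, so that $\sigma^{-1}(\Ext(\Scon))$ can equal $\Ext(\context)$ precisely when $\sigma$ realises an exact cover; equivalently, by the implication characterisation one wants $\sigma^{-1}(\Th(\Scon))\equiv\Th(\context)$ to hold exactly on covers. Verifying that such an augmentation removes the spurious extents while keeping the instance polynomial—and proving the resulting equivalence in both directions—is the main obstacle, since the fullness condition demands closure-system equality rather than the mere containment that sufficed for DSM. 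Once this is in place, NP-hardness of DfSM follows, and together with the membership argument both problems are NP-complete.
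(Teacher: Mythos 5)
Your NP-membership argument coincides with the paper's (guess $\sigma$, verify via the recognition corollary), and your DSM reduction is sound but genuinely different: the paper reduces subgraph isomorphism to DSM via contexts $\mathbb{G}=(V_G\cup\{\bot\},\,E_G\cup\{\{v\}\mid v\in V_G\}\cup\{\emptyset\},\,\in)$, whereas you reduce X3C and target a fixed standard scale $\mathbb{N}_q$, using the cardinality count $3q=q\cdot 3$ to force every fibre to be a block of an exact cover. Your route is arguably tighter, since it shows hardness even when the scale is restricted to the nominal family. Two small points you should still record: the backward direction needs $\emptyset\in\Ext(\context)$, which holds for $q\ge 2$ because a cover contains two disjoint blocks so $\bigcap_j S_j=\emptyset$; and the degenerate case $q=1$ must be excluded, much as the paper excludes graphs of size below three.

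The genuine gap is DfSM. You correctly diagnose that your gadget cannot be full as it stands, but the promised ``augmentation'' is never constructed, and it is doubtful one exists along these lines: by \cref{lem:local-full-sm-equivalences}, a surjective full scale-measure forces $(\Ext(\context),\subseteq)\cong(\Ext(\Scon),\subseteq)$, so with target $\mathbb{N}_q$ the data context would need exactly $q+2$ extents, while an arbitrary X3C instance produces all $m$ attribute extents plus their pairwise intersections. Any objects or attributes added to kill these spurious extents must not themselves create extents outside $\sigma^{-1}(\Ext(\Scon))$, and since the target closure system is fixed in advance while the source depends on the instance, the required equality of closure systems is exactly what the reduction cannot control. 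The paper avoids this by switching reductions entirely for DfSM: it reduces \emph{induced} subgraph isomorphism, builds $\mathbb{G}=(V_G,\,E_G\cup\{\{v\}\mid v\in V_G\}\cup\{\emptyset\},\,\in)$ and $\mathbb{H}$ analogously, and takes the pattern context $\mathbb{H}$ as the domain with the host context $\mathbb{G}$ as the scale; fullness then forces $\sigma$ to be injective and both edge-preserving and edge-reflecting, which is precisely induced subgraph containment. As written, your proposal establishes NP-completeness of DSM but leaves the hardness of DfSM unproven.
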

\begin{proof}
  To avoid any peculiarities, we consider in the following reductions
  graphs of size at least three.
  \begin{enumerate}[a)]

  \item \textbf{hardness:} To show NP-hardness of the DSM problem, we
    reduce the subgraph isomorphism (SI) problem to DSM. For two
    Graphs $G,H$ consider the formal context
    $\mathbb{G}=(V_{G}\cup\{\bot\},E_{G}\cup \{\{v\}\mid v\in
    V_{G}\}\cup \{\emptyset\},\in)$ and analogously constructed formal
    context $\mathbb{H}$. The set of extents of $\mathbb{G}$ is equal
    to $\{\{v\}\mid v\in V_{G}\}\cup E_G \cup
    \{\emptyset,V_{G}\cup\{\bot\}\}$.  This reduction is polynomial in
    the size of $G,H$.
    \begin{compactenum}
    \item[$\Rightarrow$] Let $\sigma$ be a surjective scale-measure of
      $\mathbb{G}$ into $\mathbb{H}$. Then
      $\sigma^{-1}(\Ext(\mathbb{H}))\subseteq
      \sigma^{-1}(\Ext(\mathbb{G}))$. In particular for every
      $e\in E_H$ we have $\sigma^{-1}(e)\in \Ext(\mathbb{G})$. Since
      $\sigma$ is surjective, we can infer that
      $2\leq \abs{\sigma^{-1}(e)} < \abs{V_G}$. The only extents with
      a cardinality in that interval are the edge extents of
      $\mathbb{G}$. Thus $\sigma^{-1}(e) \in E_G$ and all nodes of $e$
      have a unique pre-image. Since $E_H\subseteq \Ext(\mathbb{H})$,
      all nodes with at least one edge have a unique pre-image. WLOG
      we assume that the pre-image of all $v\in V_{H}$ have a unique
      pre-image, otherwise change the map $\sigma$ for all but one
      node to $\bot$. 
      Hence the map $\sigma^{-1}: V_H\to V_G$ is edge preserving and
      an isomorphism of $(H,E_H)$ into a subgraph of $G$, i.e., into the subgraph given by
      $(\cdom(\sigma^{-1}),\{e\in E_G\mid \exists l\in E_H:\sigma^{-1}(l)=e\})$.
    \item[$\Leftarrow$] Let $\sigma$ be an isomorphism of
      $\mathbb{H}$ into a subgraph of $\mathbb{G}$, i.e., an edge
      preserving map from $H$ into $G$. Based on this consider the map
      $\theta:V_G\cup \{\bot\}\to V_H\cup \{\bot\}$ where
      $\theta(v)=\sigma^{-1}(v)$ and $\bot$ otherwise. The map
      $\theta$ is surjective by definition. For the node extents, the
      empty extent and the top extent $V_H\cup\{\bot\}$ of
      $\mathbb{H}$ we have that their pre-images are in
      $\Ext(\mathbb{G})$. For an extent $e$ in $E_{H}$ we have that
      $\theta^{-1}(e)=\sigma(e)\in E_G$, since $\sigma$ is edge
      preserving. Thus $\theta$ is a surjective scale-measure from
      $\mathbb{G}$ into $\mathbb{H}$.
    \end{compactenum}
    \textbf{completeness:} An algorithm for identifying if there is a
    surjective scale-measure for two context $\Ocon,\context$ can be
    constructed by guessing non-deterministically a mapping
    $\sigma$. The check for a surjective scale-measure can be done
    deterministically in polynomial time in the size of both contexts.
  \item \textbf{hardness:} To show NP-hardness of the DfSM problem, we
    reduce the induced subgraph isomorphism (ISI) problem to the DSM
    problem. For two Graphs $G,H$ consider the contexts
    $\mathbb{G}=(V_{G},E_{G}\cup \{\{v\}\mid v\in V_{G}\}\cup
    \{\emptyset\},\in)$ and $\mathbb{H}$ analogously. The set of
    extents of $\mathbb{G}$ is equal to $\{\{v\}\mid v\in V_{G}\}\cup
    E_G \cup \{\emptyset,V_{G}\}$.  This reduction is polynomial in
    the size of $G,H$.
    \begin{compactenum}
    \item[$\Rightarrow$] Let $\sigma$ be a full scale-measure of
      $\mathbb{H}$ into $\mathbb{G}$. Then for every $v\in V_H$ the
      extent extent $\{v\}\in \Ext(\mathbb{H})$ is the pre-image of an
      extent $A$ of $\Ext(\mathbb{G})$. Since $v\in \sigma^{-1}(A)$ we
      have $\sigma(v)\in A$ and from $\{v\}= \sigma^{-1}(A)$ we can
      infer that there exists no other $w\in V_H$ with $w\neq v$ and
      $\sigma(w)=\sigma(v)$. Thus $\sigma$ is injective. 

      For an edge $e\in E_G$ where $e\subseteq \cdom(\sigma)$ we have
      $\sigma^{-1}(e)\in \Ext(\mathbb{H})$ and since $\sigma$ is
      injective we can infer $\abs{\sigma^{-1}(e)}=2$ and thus
      $\sigma^{-1}(e)\in E_H$. For an edge $e\in E_H$ there must be an
      $A\in \Ext(\mathbb{G})$ with $\sigma^{-1}(A)=e$. Thus
      $\sigma(e)\subseteq A$. Since the only extents of $\mathbb{G}$
      for which this applies are $V_G$ extents of cardinality
      two
      , i.e., the edges of
      $G$. Thus $\sigma(e)\in \Ext(\mathbb{G})$ and further
      $\sigma(e)\in E_G$. Concluding, $\sigma$ is an isomorphism
      between $H$ and $\sigma(H)$.
    \item[$\Leftarrow$] Let $\sigma$ be an isomorphism between $H$
      and an induced subgraph of $G$. Then for every $v\in V_G$ is
      $\sigma^{-1}(\{v\})$ either in $V_H$ or empty since $\sigma$ is
      injective. For edges $e\in E_G$ where $e\subseteq \cdom(\sigma)$
      we have that $\sigma^{-1}(e)\in E_H\subseteq \Ext(\mathbb{H})$
      since $\sigma$ is an isomorphism restricted to
      $\cdom(\sigma)$. In case $e\subseteq \cdom(\sigma)$ does not
      hold, the pre-image is equal to a node or the emptyset. Thus
      $\sigma^{-1}(E_G)\subseteq\Ext(\mathbb{H})$. Furthermore,
      $\sigma^{-1}(\emptyset)=\emptyset \in \Ext(\mathbb{H})$ and
      $\sigma^{-1}(V_G)=V_H \in \Ext(\mathbb{H})$. Thus $\sigma$ is a
      scale-measure of $\mathbb{H}$ into $\mathbb{G}$. For an edge
      $e\in E_H$ we have that
      $\sigma^{-1}(\{\sigma(v)\mid v\in e\})=e$ and
      $\{\sigma(v)\mid v\in e\}\in E_G\subseteq \Ext(\mathbb{G})$
      since $\sigma$ is isomorphism restricted to $\cdom$. Thus
      $\sigma$ is a full scale-measure.
    \end{compactenum}
    \textbf{completeness:} An algorithm for identifying if there is a
    full scale-measure for two context $\Ocon,\context$ can be
    constructed by guessing non-deterministically a mapping
    $\sigma$. The check for a full scale-measure can be done
    deterministically in polynomial time in the size of both contexts.
  \end{enumerate}
\end{proof}

Unfortunately all these problems are NP-complete which makes the task
computational costly. Further studying the computational complexity of
the local variations or preserving maps will not help either, since
these problems are only slight variations from the ones studied
here. For example a reduction for local scale-measures can be done
analogously by removing the $\bot$ node that was itnroduced to capture
all nodes that were not in the co-domain of the isomorphism, or
consider a map $\sigma$ from $\mathbb{G}$ to $\mathbb{H}$ of the local
variant of the full scale-measures reduction.

Now that we understand the computational complexities for both
problems, we want to present an interesting property of scales that
may actually help to reduce the computational efforts.

\section{Heredity of Ordinal Motifs}
For the use of ordinal motifs for the analysis of a data set, it is
meaningful to consider a set of ordinal motifs
$\mathcal{O}$. Moreover, there are particularly meaningful families of
scale contexts, such as the standardized (or elementary) scales of
ordinal type~\cite{fca-book}. These have a special property, called
\emph{heredity}, i.e., every subscale of a scale belonging to the same
family is equivalent to a scale of the same family.

In this section we will demonstrate how the notion for heredity of
scales impacts scale-measures.

\begin{lemma}[Heredity of Scale-Measures]\label{lem:heredity}
  Let $\context$ be a formal context and $\Scon$ a scale with
  $\sigma: G_{\context}\to G_{\Scon}$ a surjective (full)
  scale-measure. For any $H\subseteq G_{\context}$ is the map
  $\sigma\mid_{H}$ a surjective (full) scale-measure from
  $\context{[H,M_{\context}]}$ into $\Scon{[\sigma(H),M_{\Scon}]}$.
\end{lemma}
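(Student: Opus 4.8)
The plan is to reduce everything to two elementary facts and then chase preimages. First I would recall the standard description of the extents of an object-restricted subcontext: since $\context{[H,M_{\context}]}$ keeps all attributes of $\context$ and only restricts the objects to $H$, its extents are exactly the traces $\Ext(\context{[H,M_{\context}]}) = \{E\cap H \mid E\in\Ext(\context)\}$, i.e. the sub-closure system of $\Ext(\context)$ on $H$ in the sense defined earlier; likewise $\Ext(\Scon{[\sigma(H),M_{\Scon}]}) = \{F\cap\sigma(H)\mid F\in\Ext(\Scon)\}$. Surjectivity of $\sigma\mid_{H}$ onto $\sigma(H)$ is immediate from the definition of the image, so the only real content is the scale-measure condition (and, for the second claim, fullness).

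For the scale-measure property I would take an arbitrary $A\in\Ext(\Scon{[\sigma(H),M_{\Scon}]})$, write it as $A=F\cap\sigma(H)$ with $F\in\Ext(\Scon)$, and compute its preimage. The key observation is that for $h\in H$ one has $\sigma(h)\in\sigma(H)$ automatically, so $(\sigma\mid_{H})^{-1}(A) = \{h\in H\mid \sigma(h)\in F\} = H\cap\sigma^{-1}(F)$. Since $\sigma$ is a scale-measure and $F\in\Ext(\Scon)$, the set $\sigma^{-1}(F)$ lies in $\Ext(\context)$, and intersecting with $H$ yields a trace, hence an element of $\Ext(\context{[H,M_{\context}]})$ by the first fact. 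This establishes that $\sigma\mid_{H}$ is a surjective scale-measure.

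For the full case I would additionally prove the reverse inclusion $\Ext(\context{[H,M_{\context}]})\subseteq (\sigma\mid_{H})^{-1}(\Ext(\Scon{[\sigma(H),M_{\Scon}]}))$. Starting from an extent $E\cap H$ with $E\in\Ext(\context)$, fullness of $\sigma$ lets me write $E=\sigma^{-1}(F)$ for some $F\in\Ext(\Scon)$, and the same preimage computation gives $E\cap H = H\cap\sigma^{-1}(F) = (\sigma\mid_{H})^{-1}(F\cap\sigma(H))$, exhibiting $E\cap H$ as the preimage of the extent $F\cap\sigma(H)$. Combined with the scale-measure direction, this yields the equality of closure systems that characterises fullness.

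I do not expect a genuine obstacle here; the argument is essentially bookkeeping. The one step that must be handled with care is the identity $(\sigma\mid_{H})^{-1}(A)=H\cap\sigma^{-1}(F)$, which silently uses that $A=F\cap\sigma(H)$ and $F$ have the same preimage inside $H$ precisely because $\sigma(h)\in\sigma(H)$ for every $h\in H$. Getting this identity exactly right, together with the trace description of subcontext extents, is what makes both the plain and the full statements fall out of one and the same computation.
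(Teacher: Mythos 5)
Your proof is correct and follows essentially the same route as the paper's: both rest on the trace description $\Ext(\context{[H,M_{\context}]})=\{E\cap H\mid E\in\Ext(\context)\}$ (and its analogue for $\Scon{[\sigma(H),M_{\Scon}]}$), the same preimage computation for the scale-measure direction, and the same choice $C=\check C\cap\sigma(H)$ for fullness. If anything, your explicit identity $(\sigma\mid_{H})^{-1}(F\cap\sigma(H))=H\cap\sigma^{-1}(F)$ is stated more precisely than the paper's corresponding step.
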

\begin{proof}
  First we show that $\sigma\mid_{H}$ is a scale-measure from
  $\context{[H,M_{\context}]}$ into
  $\Scon{[\sigma(H),M_{\Scon}]}$. Since $\Scon{[\sigma(H),M_{\Scon}]}$
  is an induced subcontext of $\Scon$ with equal attribute set, we can write every extent
  $A\in \Ext(\Scon{[\sigma(H),M_{\Scon}]})$ as the intersection
  ${\check A} \cap \sigma(H)$ for some ${\check A}\in \Ext(\Scon)$. The
  pre-image $(\sigma\mid_{H})^{-1}({\check A} \cap \sigma(H))$ is equal
  to
  $(\sigma\mid_{H})^{-1}({\check A})\cap
  (\sigma\mid_{H})^{-1}(\sigma(H))$. Since ${\check A}$ and $\sigma(H)$
  are entailed in the image of $\sigma$ on $H$ we can follow that
  $(\sigma\mid_{H})^{-1}({\check A}) = \sigma^{-1}({\check A})$ and
  $(\sigma\mid_{H})^{-1}(\sigma(H)) = H$. Moreover, since $\sigma$ is
  a scale-measure we can follow that $\sigma^{-1}({\check A})$ is an
  extent of $\context$. Summarizing, the preimage
  $(\sigma\mid_{H})^{-1}(A)$ is equal to the intersection of an extent
  of $\context$ and $H$. Hence, $(\sigma\mid_{H})^{-1}(A)$ is an
  extent of $\context{[H,M_{\context}]}$ and $\sigma\mid_{H}$ a
  scale-measure of $\context{[H,M_{\context}]}$ into
  $\Scon{[\sigma(H),M_{\Scon}]}$.

  In case $\sigma$ is a full scale-measure it remains to be shown that
  for every $D\in \Ext(\context{[H,M]})$ there exists a
  $C\in \Scon{[\sigma(H),M_{\Scon}]}$ with
  $(\sigma\mid_{H})^{-1}(C)=D$. We can write the extent $D$ as the
  intersection $\check D \cap H$ where $\check D\in
  \Ext(\context)$. Since $\sigma$ is a full scale-measure we can
  follow for $\check D$ that there is a $\check C\in \Ext(\Scon)$ with
  $\sigma^{-1}(\check C)=\check D$. Since $\Scon{[\sigma(H),M_{\Scon}]}$
  is an induced subcontext of $\Scon$ with equal attribute set we find
  that ${\check C} \cap \sigma(H)$ is an extent of
  $\Scon[\sigma(H),M_{\Scon}]$. Thus, for
  $C\coloneqq {\check C} \cap \sigma(H)$ we find that
  $(\sigma\mid_{H})^{-1}(\check C\cap
  \sigma(H))=(\sigma\mid_{H})^{-1}(\check C)\cap
  (\sigma\mid_{H})^{-1}(\sigma(H))$ and furthermore that
  $(\sigma\mid_{H})(\check C)\cap
  (\sigma\mid_{H})^{-1}(\sigma(H))=\check D \cap H = D$.  Hence,
  $\sigma\mid_{H}$ is a full scale-measure.

  The map $\sigma\mid_{H}$ is surjective, since the object set of
  $\Scon{[\sigma(H),M_{\Scon}]}$ is equal to the co-domain of $\sigma\mid_{H}$.
\end{proof}

\begin{proposition}[Heredity of Surjective Scale-Measures]\label{prop:incremental-sur-sm}
  Let $\context$ be a formal context and $\Scon$ a heredity scale with
  $\sigma: G_{\context}\to G_{\Scon}$ a surjective (full)
  scale-measure. Then for any $H\subseteq G_{\context}$ is the map
  $\sigma\mid_{H}$ a surjective (full) scale-measure from
  $\context{[H,M_{\context}]}$ into an ordinal motif of the same
  family as $\Scon$.
\end{proposition}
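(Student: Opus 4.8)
The plan is to chain together the two preceding results, \cref{lem:heredity} and the defining property of a heredity scale. The statement we must prove is almost a direct corollary of \cref{lem:heredity}: that lemma already gives us that $\sigma\mid_{H}$ is a surjective (full) scale-measure from $\context{[H,M_{\context}]}$ into the induced subcontext $\Scon{[\sigma(H),M_{\Scon}]}$. The only additional claim here is that the target context belongs to the \emph{same family} as $\Scon$, rather than merely being some arbitrary induced subscale. This is exactly where the hypothesis that $\Scon$ is a \emph{heredity scale} enters, which the earlier discussion defines as the property that every subscale of a scale in the family is equivalent to a scale of the same family.

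First I would invoke \cref{lem:heredity} verbatim to obtain that $\sigma\mid_{H}$ is a surjective (full) scale-measure from $\context{[H,M_{\context}]}$ into $\Scon{[\sigma(H),M_{\Scon}]}$. Next I would observe that $\Scon{[\sigma(H),M_{\Scon}]}$ is, by construction, a subscale of $\Scon$ obtained by restricting the object set to $\sigma(H)\subseteq G_{\Scon}$ while keeping the full attribute set $M_{\Scon}$. Since $\Scon$ is assumed to be a heredity scale, its defining property tells us that this subscale is equivalent to some scale $\Scon[T]$ in the same family. Finally I would argue that composing the scale-measure $\sigma\mid_{H}$ with the equivalence between $\Scon{[\sigma(H),M_{\Scon}]}$ and $\Scon[T]$ yields a surjective (full) scale-measure from $\context{[H,M_{\context}]}$ into $\Scon[T]$, an ordinal motif of the same family as $\Scon$, as required.

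The main obstacle, and the step requiring the most care, will be the last one: verifying that an \emph{equivalence} of scales preserves the property of being a surjective (full) scale-measure under composition. I expect that "equivalent" here means the two contexts have isomorphic closure systems (equal up to relabeling), so that there is a bijection on objects inducing a bijection on extents. Under such a relabeling bijection $\tau:\sigma(H)\to G_{\Scon[T]}$, the pre-image condition defining a scale-measure is preserved because $\tau$ carries $\Ext(\Scon{[\sigma(H),M_{\Scon}]})$ bijectively onto $\Ext(\Scon[T])$; hence $(\tau\circ\sigma\mid_{H})^{-1}(A) = (\sigma\mid_{H})^{-1}(\tau^{-1}(A))$ remains an extent of $\context{[H,M_{\context}]}$ for every $A\in\Ext(\Scon[T])$, and surjectivity and fullness transfer similarly. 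Making this precise depends on the exact formal meaning of "equivalent to a scale of the same family," so I would want to pin down that definition and then confirm the composition is well-behaved; everything else follows mechanically from \cref{lem:heredity}.
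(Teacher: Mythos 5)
Your proposal is correct and takes essentially the same route as the paper, which simply states that the result "follows directly from the heredity lemma and the definition of heredity scales." You in fact supply more detail than the paper does, in particular the careful check that composing with the scale equivalence preserves the surjective (full) scale-measure property.
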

\begin{proof}
  This proposition follows directly from \cref{lem:heredity} and the
  definition of heredity scales.
\end{proof}

This proposition is essential when applying ordinal motifs for the
analysis of ordinal data set using heredity scales. When
computing all candidates for (full) scale-measures this statement
allows to discard a large proportion. Many families of scales, such as
the nominal scales, ordinal scales, inter-ordinal scales,
contra-nominal scales, etc, have the heredity property~\cite[Proposition 123]{fca-book}. Unfortunately,
the crown scales do not have this property. The problem for deciding
surjective scale-measures for  crown scales is related to the
hamiltonian path problem and therefore we do not expect there to be an
easy solution to this problem. However, as far as our preliminary
investigations on real-world data suggest, large crown scales are
rare. Nonetheless, this claim has to be studied more thoroughly.

\section{Applying Ordinal Motifs to Data Sets}
\label{sec:explain}
We demonstrate the applicability of ordinal motifs on real-world data
using a medium sized formal context: the \emph{spices planner} data
set~\cite{pqcores}. This context contains 56 meals (objects) and 37
spices and food categories (attributes). The incidence encodes that a
spice is recommended to be used to cook a meal or a meal belongs to a
food category. The context has 531 formal concepts. We conduct our
experiment on the dual context, i.e., $\context^{d}\coloneqq
(M,G,I^{d})$, to derive ordinal motifs within the spices and food
categories. For our application we employ the standard
scales~\cite{scaling}, as they are the most commonly used.

We recall their definitions, where $[n]\coloneqq\{1,\dots,n\}$ and
$\context_{1}\mid \context_{2}$ is the context apposition
operator. For crown scales we further require that $n\geq 3$.
\begin{align}\label{eq:standard-scales}
  \mathbb{N}_{n} &\coloneqq ([n],[n],=) \tag{Nominal Scale}\\
  \Ocon_{n} &\coloneqq ([n],[n],\leq)\tag{Ordinal Scale}\\
  \mathbb{I}_{n} &\coloneqq ([n],[n],\leq) \mid ([n],[n],\geq) \tag{Interordinal Scale}\\
  \mathbb{B}_{n} &\coloneqq ([n],[n],\neq)\tag{Contranominal Scale}\\
  \mathbb{C}_{n} &\coloneqq ([n],[n],J), \textit{ where } (a,b)\in J \iff a=b \text{ or } (a,b)=(n,1) \text{ or } b=a+1 \tag{Crown Scale}
\end{align}
We depict all these scales, more precisely their contextual
representations, in~\cref{fig:standard-scales1}. Additionally, we show
their corresponding concept lattices. Hence, our goal is to identify
these ordinal motifs (or ordinal patterns) in the spices data set. 
\begin{figure}
  \newcommand{\scale}{.75}
  \begin{minipage}{0.24\linewidth}
\scalebox{\scale}{\begin{cxt} 
  \cxtName{$\mathbb{N}_{n}$}
  \att{\rotatebox{90}{1}}
  \att{\rotatebox{90}{2}}
  \att{\rotatebox{90}{3}}
  \att{$\dots$}
  \att{\rotatebox{90}{$n-1$}}  
  \att{\rotatebox{90}{$n$}}  
  \obj{x.....}{1}
  \obj{.x....}{2}
  \obj{..x...}{3}
  \obj{......}{$\dots$}
  \obj{....x.}{$n-1$}
  \obj{.....x}{$n$}
\end{cxt}}    
  \end{minipage}
  \begin{minipage}{0.24\linewidth}\centering
\scalebox{\scale}{\begin{tikzpicture}[xscale=0.8,
    concept/.style={fill=black!36,shape=circle,draw=black!80},
  hidden/.style={opacity=0.5},
  edge/.style={color=gray},]
  \node[concept,hidden] (a) at (0,0){};
  \node[concept] (b) at (-2.5,1){};
  \node[concept] (c) at (-1.5,1){};
  \node[concept] (d) at (-0.5,1){};
  \node[concept,draw=white,fill=white] (e) at (0.5,1){$\dots$};
  \node[concept] (f) at (1.5,1){};
  \node[concept] (g) at (2.5,1){};
  \node[concept,hidden] (h) at (0,2){};
  \path[edge,hidden] (a) edge (b);
  \path[edge,hidden] (a) edge (c);
  \path[edge,hidden] (a) edge (d);

  \path[edge,hidden] (a) edge (f);
  \path[edge,hidden] (a) edge (g);
  \path[edge,hidden] (h) edge (b);
  \path[edge,hidden] (h) edge (c);
  \path[edge,hidden] (h) edge (d);

  \path[edge,hidden] (h) edge (f);
  \path[edge,hidden] (h) edge (g);
  \coordinate[label={above right:{1}}]() at (b);
  \coordinate[label={above right:{2}}]() at (c);
  \coordinate[label={above right:{3}}]() at (d);
  \coordinate[label={above left:{$n-1$}}]() at (f);
  \coordinate[label={above left:{$n$}}]() at (g);
  \coordinate[label={below right:{1}}]() at (b);
  \coordinate[label={below right:{2}}]() at (c);
  \coordinate[label={below right:{3}}]() at (d);
  \coordinate[label={below left:{$n-1$}}]() at (f);
  \coordinate[label={below left:{$n$}}]() at (g);
\end{tikzpicture}}
  \end{minipage}
  \begin{minipage}{0.24\linewidth}
\scalebox{\scale}{\begin{cxt} 
    \cxtName{$\mathbb{C}_{n}$}
    \att{\rotatebox{90}{1}}
    \att{\rotatebox{90}{2}}
    \att{\rotatebox{90}{3}}
    \att{$\dots$}
    \att{\rotatebox{90}{$n-1$}}  
    \att{\rotatebox{90}{$n$}}  
    \obj{xx....}{1}
    \obj{.xx...}{2}
    \obj{..xx..}{3}
    \obj{......}{$\dots$}
    \obj{....xx}{$n-1$}
    \obj{x....x}{$n$}
  \end{cxt}}
  \end{minipage}
  \begin{minipage}{0.24\linewidth}\centering
\scalebox{\scale}{ \begin{tikzpicture}[xscale=0.7,
    concept/.style={fill=black!36,shape=circle,draw=black!80},
                      hidden/.style={opacity=0.5},
                      edge/.style={color=gray}]
    \node[concept,hidden] (a) at (0,0){};
    \node[concept] (ba) at (-2.5,1){};
    \node[concept] (bb) at (-1.5,1){};
    \node[concept] (bc) at (-0.5,1){};
    \node[concept,draw=white,fill=white] (bd) at (0.5,1){$\dots$};
    \node[concept] (be) at (1.5,1){};
    \node[concept] (bf) at (2.5,1){};

    \coordinate[label={below right:{1}}](c) at (ba);
    \coordinate[label={below right:{2}}](c) at (bb);
    \coordinate[label={below right:{3}}](c) at (bc);

    \coordinate[label={below left:{$n-1$}}](c) at (be);
    \coordinate[label={below left:{$n$}}](c) at (bf);

    \node[concept] (ca) at (-2.5,2){};
    \node[concept] (cb) at (-1.5,2){};
    \node[concept] (cc) at (-0.5,2){};
    \node[concept,draw=white,fill=white] (cd) at (0.5,2){$\dots$};
    \node[concept] (ce) at (1.5,2){};
    \node[concept] (cf) at (2.5,2){};

    \coordinate[label={above right:{1}}](c) at (ca);
    \coordinate[label={above right:{2}}](c) at (cb);
    \coordinate[label={above right:{3}}](c) at (cc);

    \coordinate[label={above left:{$n-1$}}](c) at (ce);
    \coordinate[label={above left:{$n$}}](c) at (cf);

    \node[concept,hidden] (h) at (0,3){};
    \path[edge,hidden] (a) edge (ba);
    \path[edge,hidden] (a) edge (bb);
    \path[edge,hidden] (a) edge (bc);

    \path[edge,hidden] (a) edge (be);
    \path[edge,hidden] (a) edge (bf);

    \path[edge] (ba) edge (ca);
    \path[edge] (bb) edge (cb);
    \path[edge] (bc) edge (cc);

    \path[edge] (be) edge (ce);
    \path[edge] (bf) edge (cf);

    \path[edge] (ba) edge (cb);
    \path[edge] (bb) edge (cc);
    \path[edge,dashed] (bc) edge (cd);
    \path[edge,dashed] (bd) edge (ce);
    \path[edge] (be) edge (cf);

    \path[edge,in=-30,out=150] (bf) edge (ca);

    \path[edge,hidden] (h) edge (ca);
    \path[edge,hidden] (h) edge (cb);
    \path[edge,hidden] (h) edge (cc);

    \path[edge,hidden] (h) edge (ce);
    \path[edge,hidden] (h) edge (cf);
  \end{tikzpicture} }  
  \end{minipage}

  \begin{minipage}{0.24\linewidth}
\scalebox{\scale}{\begin{cxt} 
  \cxtName{$\Ocon_{n}$}
  \att{\rotatebox{90}{1}}
  \att{\rotatebox{90}{2}}
  \att{\rotatebox{90}{3}}
  \att{$\dots$}
  \att{\rotatebox{90}{$n-1$}}  
  \att{\rotatebox{90}{$n$}}  
  \obj{x.....}{1}
  \obj{xx....}{2}
  \obj{xxx...}{3}
  \obj{xxxx..}{$\dots$}
  \obj{xxxxx.}{$n-1$}
  \obj{xxxxxx}{$n$}
\end{cxt}}
  \end{minipage}
  \begin{minipage}{0.24\linewidth}\centering
\scalebox{\scale}{\begin{tikzpicture}[yscale=0.8, concept/.style={fill=black!36,shape=circle,draw=black!80},
  hidden/.style={opacity=0.5},
  edge/.style={color=gray}]
  \node[concept] (a) at (0,0){};
  \node[concept] (b) at (0,1){};
  \node[concept,draw=white,fill=white] (d) at (0,2){$\dots$};
  \node[concept] (e) at (0,3){};
  \node[concept] (f) at (0,4){};
  \path[edge] (a) edge (b);
  \path[edge] (b) edge (d);
  \path[edge] (d) edge (e);
  \path[edge] (e) edge (f);
  \coordinate[label={below right:{1}}]() at (f);
  \coordinate[label={below right:{2}}]() at (e);
  \coordinate[label={below right:{$n-1$}}]() at (b);
  \coordinate[label={below right:{$n$}}]() at (a);
  \coordinate[label={above right:{$n$}}]() at (a);
  \coordinate[label={above right:{$n-1$}}]() at (b);
  \coordinate[label={above right:{2}}]() at (e);
  \coordinate[label={above right:{1}}]() at (f);
\end{tikzpicture}}
  \end{minipage}
  \begin{minipage}{0.24\linewidth}
    \scalebox{\scale}{
\begin{cxt} 
  \cxtName{$\mathbb{B}_{n}$}
  \att{\rotatebox{90}{1}}
  \att{\rotatebox{90}{2}}
  \att{\rotatebox{90}{3}}
  \att{$\dots$}
  \att{\rotatebox{90}{$n-1$}}  
  \att{\rotatebox{90}{$n$}}    
  \obj{.xxxxx}{1}
  \obj{x.xxxx}{2}
  \obj{xx.xxx}{3}
  \obj{xxx.xx}{$\dots$}
  \obj{xxxx.x}{$n-1$}
  \obj{xxxxx.}{$n$}
\end{cxt}
    }
  \end{minipage}
  \begin{minipage}{0.24\linewidth}\centering
    \scalebox{\scale}{
  \begin{tikzpicture}[concept/.style={fill=black!36,shape=circle,draw=black!80},
                      hidden/.style={opacity=0.5},
                      edge/.style={color=gray},]
    \node[concept,hidden] (a) at (0,0){};
    \node[concept] (b) at (-1,1){};
    \node[concept] (c) at (0,1){};
    \node[concept] (d) at (1,1){};
    \node[concept] (e) at (-1,2){};
    \node[concept] (f) at (0,2){};
    \node[concept] (g) at (1,2){};
    \node[concept,hidden] (h) at (0,3){};
    \path[edge,hidden] (a) edge (b);
    \path[edge,hidden] (a) edge (c);
    \path[edge,hidden] (a) edge (d);
    \path[edge] (b) edge (e);
    \path[edge] (b) edge (f);
    \path[edge] (c) edge (e);
    \path[edge] (c) edge (g);
    \path[edge] (d) edge (f);
    \path[edge] (d) edge (g);
    \path[edge,hidden] (e) edge (h);
    \path[edge,hidden] (f) edge (h);
    \path[edge,hidden] (g) edge (h);

    \coordinate[label={below right:{1}}]() at (b);
    \coordinate[label={below right:{2}}]() at (c);
    \coordinate[label={below right:{3}}]() at (d);

    \coordinate[label={above right:{3}}]() at (e);
    \coordinate[label={above right:{2}}]() at (f);
    \coordinate[label={above right:{1}}]() at (g);

  \end{tikzpicture}
    }
  \end{minipage}

  \begin{minipage}{0.49\linewidth}\centering
    \scalebox{\scale}{
      \begin{cxt} 
        \cxtName{$\mathbb{I}_{n}$}
        \att{\rotatebox{90}{$1_\leq$}}
        \att{\rotatebox{90}{$2_\leq$}}
        \att{\rotatebox{90}{$3_\leq$}}
        \att{$\dots$}
        \att{\rotatebox{90}{$(n-1)_\leq$}}  
        \att{\rotatebox{90}{$n_\leq$}}  
        \att{\rotatebox{90}{$1_\geq$}}
        \att{\rotatebox{90}{$2_\geq$}}
        \att{\rotatebox{90}{$3_\geq$}}
        \att{$\dots$}
        \att{\rotatebox{90}{$(n-1)_\geq$}}  
        \att{\rotatebox{90}{$n_\geq$}}  
        \obj{xxxxxxx.....}{1}
        \obj{.xxxxxxx....}{2}
        \obj{..xxxxxxx...}{3}
        \obj{...xxxxxxx..}{$\dots$}
        \obj{....xxxxxxx.}{$n-1$}
        \obj{.....xxxxxxx}{$n$}
      \end{cxt}
    }
  \end{minipage}
  \begin{minipage}{0.49\linewidth}\centering
    \scalebox{\scale}{
\begin{tikzpicture}[concept/.style={fill=black!36,shape=circle,draw=black!80},
  hidden/.style={opacity=0.5},
  edge/.style={color=gray},]

  \node[concept] (d) at (0,5){};

  \node[concept] (c) at (-0.5,4){};
  \node[concept] (ad) at (0.5,4){};

  \node[concept] (b) at (-1,3){};
  \node[concept] (m1) at (0,3){};
  \node[concept] (bd) at (1,3){};

  \node[concept] (a) at (-1.5,2){};
  \node[concept] (m2) at (-0.5,2){};
  \node[concept] (m3) at (0.5,2){};
  \node[concept] (cd) at (1.5,2){};

  \node[concept,hidden] (bot) at (0,1){};

  \path[edge,hidden] (a) edge (bot);
  \path[edge,hidden] (m2) edge (bot);
  \path[edge,hidden] (m3) edge (bot);
  \path[edge,hidden] (cd) edge (bot);

  \path[edge,thick] (b) edge (a);
  \path[edge,thick] (b) edge (m2);
  \path[edge,thick] (m1) edge (m2);
  \path[edge,thick] (m1) edge (m3);
  \path[edge,thick] (bd) edge (m3); 
  \path[edge,thick] (bd) edge (cd); 

  \path[edge,thick] (c) edge (b);
  \path[edge,thick] (c) edge (m1);
  \path[edge,thick] (ad) edge (m1);
  \path[edge,thick] (ad) edge (bd);

  \path[edge,thick] (d) edge (c);
  \path[edge,thick] (d) edge (ad);

  \coordinate[label={above:{$4_\leq,1_\geq,$}}]() at (d);
  \coordinate[label={above left:{$3_\leq$}}]() at (c);
  \coordinate[label={above left:{$2_\leq$}}]() at (b);
  \coordinate[label={above left:{$1_\leq$}}]() at (a);

  \coordinate[label={above right:{$2_\geq$}}]() at (ad);
  \coordinate[label={above right:{$3_\geq$}}]() at (bd);
  \coordinate[label={above right:{$4_\geq$}}]() at (cd);

  \coordinate[label={below right:{1}}]() at (a);
  \coordinate[label={below right:{2}}]() at (m2);
  \coordinate[label={below right:{3}}]() at (m3);
  \coordinate[label={below right:{4}}]() at (cd);
\end{tikzpicture}
    }    
  \end{minipage}

  \caption{In this figure we depict the formal contexts and concept
    lattices of standard scales. From top left to bottom right we
    depicted the \emph{nominal scale} $\mathbb{N}_{n}$, the
    \emph{crown scale} $\mathbb{C}_{n}$, the \emph{ordinal scale}
    $\mathbb{O}_{n}$, the \emph{contranominal scale} $\mathbb{B}_{n}$
    and the \emph{interordinal scale} $\mathbb{I}_{n}$.}
  \label{fig:standard-scales1}
\end{figure}



The number of identified local full scale-measure of the spices data
set per standard scale can be found in~\cref{tab:identify-motifs}. In
this table we distinguish between local and maximal local (with
respect to the heredity). We observe that the spices data set entails
a large number of ordinal motifs. The interordinal scale motifs are
the most frequent in both cases, i.e., local and maximal local. For
crown scales both values are equally 2145, since crown scales do not
have the heredity property. All found ordinal scale motifs are
trivial, i.e., all 37 found motifs are of size 1. In the last row
of~\cref{tab:identify-motifs} we printed the size of the largest
ordinal motif of the respective kind. Thus, the biggest motif is
nominal and of size nine. The largest crown is of size six. We
depicted all largest motifs in the
Appendix~\cref{fig:largest-nominal-fsm,fig:largest-interordinal-fsm,fig:largest-crown-fsm,fig:largest-contranominal-fsm}.

\begin{table}[b]
  \centering
  \caption{Results for ordinal motifs of the spices planner
    context. Every column represents ordinal motifs of a particular
    standard scale family.  Maximal lf-sm is the number of local full
    scale-measures for which there is no lf-sm with a larger domain.
    Largest lf-sm refers to the largest domain size that occurs in the
    set of local full scale-measures. }
  \begin{tabular}{|l|r|r|r|r|r|}
    \hline
    &nominal&ordinal&interordinal&contranominal&crown\\\hline \hline
    local full sm&2342&37&4643&2910&2145\\ \hline
    maximal lf-sm&527&37&2550&1498&2145\\ \hline
    largest lf-sm&9&1&5&5&6\\ \hline
  \end{tabular}
  \label{tab:identify-motifs}
\end{table}

\subsubsection*{Basic Meanings}
The discovered ordinal motifs allow us to interpret parts of the
spices data set in terms of their \emph{basic meaning} of standard
scales~\cite{fca-book}. In the following we provide basic meanings of the largest
local full scale-measure with respect to the found motifs. 

\begin{description}
\item[Nominal:] The food categories \emph{miscellaneous (group), fish
    (group), potato (group), vegetables (group), meat (group), sauce
    (group), poultry (group), rice (group)} and \emph{pastries
    (group)} \textbf{form a partition}.
\item[Ordinal:] There are no non trivial local full ordinal
  scale-measures. If this motif would exist in the spices data set, it
  would \textbf{form a rank order}.
\item[Interordinal:] The spices and food categories \emph{ginger,
    mugwort, meat (group), black pepper} and \emph{juniper berries}
  \textbf{form a linear betweenness relation}.
\item[Contranominal:] The spices \emph{Thyme, Sweet Paprika, Oregano,
    Caraway} and \emph{Black Pepper} \textbf{form a partition and
    are independent}.
\item[Crown:] The literature, precisely~\textcite{fca-book}, does not
  provide a basic meaning for crowns.
\end{description}

The ordinal motifs obviously allow a far more complex and meaningful
explanation of the substructures found. To develop this is the task of
future investigations.

\section{Discussion and Conclusion}
With our work we have shown a new approach to the analysis and
interpretation of ordinal data. By using scale-measures, we have found
an expressive representation for ordinal motifs that also allows us to
calculate and measure them. The necessary and useful extension of the
notion of scale-measures to include a local variant is a result that
will find applications in Formal Concept Analysis and beyond,
independent of ordinal motifs.

While our approach is capable to extract preset frequent recurring
ordinal patterns in order structures, there is room for
improvement. First, apart from our theoretical considerations of
computational complexity, we did not address the development of
specific algorithms. On the one hand, it is certainly possible to find
better algorithms in general than the naive implementations we used in
our experiments. On the other hand, there are special classes of
interesting ordinal motifs, such as the standard scales, which
certainly allow easier computations or even simpler computation
classes.

Second, in our example application, we have resorted to a very simple
interpretation of the ordinal motifs found. Here we can imagine that
with the help of researchers from the field of human computer
interaction, general as well as area-specific explanatory methods can
be derived. A third line of research would be an extension of the
notion of ordinal motifs towards other context-based pattern
languages, such as \emph{clones} \cite{clones}, \emph{p-clones}
\cite{pclones} or \emph{complements}~\cite{negative-attr}. Fourth, the
new ability to identify standard scales may help a common conceptual
data reduction method which is based on nested representations of
concept lattices \cite{FCA-visual}. Last, among the identified ordinal
motifs are artifacts of the underlying conceptual
scaling~\cite{scaling}. Those include a lot of trivial scales such as
$\emph{small}<\emph{medium}<\emph{large}$ which one may want to
remove.

We disclosed many lines of research on how to extend and improve our
methods. Improvements can be made both algorithmically for optimized
identification of specific ordinal motifs and in terms of the textual
explanations, by providing more understandable or domain specific
textual templates. Finally, studying the occurrence of ordinal motifs
quantitatively on a large number of data sets is the next task at hand.


\printbibliography

\clearpage

\appendix
\section{Appendix}

\begin{figure}
  \centering
  \colorlet{mivertexcolor}{black!80}
\colorlet{jivertexcolor}{black!80}
\colorlet{vertexcolor}{black!80}
\colorlet{bordercolor}{black!80}
\colorlet{linecolor}{gray}
\tikzset{vertexbase/.style 2 args={semithick, shape=circle, inner sep=2pt, outer sep=0pt, draw=bordercolor},%
  vertex/.style 2 args={vertexbase={#1}{}, fill=vertexcolor!45},%
  mivertex/.style 2 args={vertexbase={#1}{}, fill=mivertexcolor!45},%
  jivertex/.style 2 args={vertexbase={#1}{}, fill=jivertexcolor!45},%
  divertex/.style 2 args={vertexbase={#1}{}, top color=mivertexcolor!45, bottom color=jivertexcolor!45},%
  conn/.style={-, thick, color=linecolor}%
}
\tikzstyle{o} = [text width=8cm, align=center, font=\tiny]
\begin{tikzpicture}[xscale=1.4,yscale=1.5,rotate=90]
  \begin{scope} 
    \begin{scope} 
      \foreach \nodename/\nodetype/\param/\xpos/\ypos in {%
        0/vertex//5/1,
        1/divertex//1/2,
        2/divertex//2/2,
        3/divertex//3/2,
        4/divertex//4/2,
        5/divertex//5/2,
        6/divertex//6/2,
        7/divertex//7/2,
        8/divertex//8/2,
        9/divertex//9/2,
        10/vertex//5/3
      } \node[\nodetype={\param}{}] (\nodename) at (\xpos, \ypos) {};
    \end{scope}
    \begin{scope} 
      \path (0) edge[conn] (3);
      \path (0) edge[conn] (6);
      \path (0) edge[conn] (1);
      \path (1) edge[conn] (10);
      \path (0) edge[conn] (8);
      \path (0) edge[conn] (9);
      \path (0) edge[conn] (2);
      \path (2) edge[conn] (10);
      \path (0) edge[conn] (5);
      \path (7) edge[conn] (10);
      \path (4) edge[conn] (10);
      \path (6) edge[conn] (10);
      \path (9) edge[conn] (10);
      \path (0) edge[conn] (4);
      \path (3) edge[conn] (10);
      \path (8) edge[conn] (10);
      \path (5) edge[conn] (10);
      \path (0) edge[conn] (7);
    \end{scope}
    \begin{scope} 
      \foreach \nodename/\labelpos/\labelopts/\labelcontent in {%
        1/above/o/{punch, desserts, cheese cookies, cakes, compote/jam, punch/tea, fruit salad, christmas cookies},
        2/above/o/{herb curd/dips, omelettes, pizza, mushrooms},
        3/above/o/{steamed fish, fried fish, grilled fish, crustaceans/shellfish, baked fish},
        4/above/o/{fried potatoes, potato casserole/gratin, mashed potatoes, potato soup, oven potatoes},
        5/above/o/{carrots, red cabbage, leaf lettuce, spinach, vegetable casserole/gratin, broccoli, cauliflower, pea/bean/lentil soup, vegetable soup/minestrone, cucumbers/salad, preserves, beans, sauerkraut, tomatoes/salad, kohlrabi},
        6/above/o/{risotto/paella, asian rice table, rice pudding, curry rice},
        7/above/o/{beef, goulash, minced meat, lamb, sauerbraten, rouladen, veal, pork, game},
        8/above/o/{duck, goose, chicken/turkey},
        9/above/o/{light sauces, tomato based pasta sauces, dark sauces},
        1/below/o/{\textbf{pastries (group)}},
        2/below/o/{\textbf{miscellaneous (group)}},
        3/below/o/{\textbf{fish (group)}},
        4/below/o/{\textbf{potato (group)}},
        5/below/o/{\textbf{vegetables (group)}},
        6/below/o/{\textbf{rice (group)}},
        7/below/o/{\textbf{meat (group)}},
        8/below/o/{\textbf{poultry (group)}},
        9/below/o/{\textbf{sauces (group)}}
      } \coordinate[label={[\labelopts]\labelpos:{\labelcontent}}](c) at (\nodename);
    \end{scope}
  \end{scope}
\end{tikzpicture}

  \caption{The largest local full nominal scale-measure of the spices
    data sets. We employed the dual context to get conceptual
    explanations of the attributes (spices). The attributes that
    induce the local full scale-measure are highlighted with bold
    font.  The diagram was rotated by 90 degrees counter clockwise to
    improve readability, i.e., the top concept is on the left.}
  \label{fig:largest-nominal-fsm}
\end{figure}

\begin{figure}
  \centering
  \colorlet{mivertexcolor}{black!80}
\colorlet{jivertexcolor}{black!80}
\colorlet{vertexcolor}{black!80}
\colorlet{bordercolor}{black!80}
\colorlet{linecolor}{gray}
\tikzset{vertexbase/.style 2 args={semithick, shape=circle, inner sep=2pt, outer sep=0pt, draw=bordercolor},%
  vertex/.style 2 args={vertexbase={#1}{}, fill=vertexcolor!45},%
  mivertex/.style 2 args={vertexbase={#1}{}, fill=mivertexcolor!45},%
  jivertex/.style 2 args={vertexbase={#1}{}, fill=jivertexcolor!45},%
  divertex/.style 2 args={vertexbase={#1}{}, top color=mivertexcolor!45, bottom color=jivertexcolor!45},%
  conn/.style={-, thick, color=linecolor}%
}
\tikzstyle{o} = [text width=3cm, align=center, font=\tiny]
\tikzstyle{r} = [text width=3cm, align=right, font=\tiny]
\tikzstyle{l} = [text width=3cm, align=left, font=\tiny]
\tikzstyle{b} = [text width=8cm, align=center, font=\tiny]
\begin{tikzpicture}[xscale=0.35,yscale=0.4]
  \begin{scope} 
    \begin{scope} 
      \foreach \nodename/\nodetype/\param/\xpos/\ypos in {%
        0/vertex//0.3826086956521735/4.143478260869564,
        1/jivertex//0.12753623188405783/9.213043478260868,
        2/jivertex//-5.866666666666667/9.244927536231884,
        3/divertex//-12.402898550724641/9.404347826086955,
        4/jivertex//6.7594202898550755/9.436231884057971,
        5/divertex//12.944927536231884/9.818840579710145,
        6/mivertex//-9.310144927536236/12.847826086956523,
        7/vertex//-3.0927536231884076/12.943478260869565,
        8/vertex//3.3478260869565197/13.03913043478261,
        9/mivertex//10.202898550724637/13.102898550724639,
        10/mivertex//-6.249275362318844/16.355072463768117,
        11/vertex//-0.03188405797101623/16.482608695652175,
        12/mivertex//6.791304347826088/16.73768115942029,
        13/mivertex//-3.1884057971014492/19.894202898550724,
        14/mivertex//3.315942028985507/20.308695652173917,
        15/vertex//-0.06376811594202891/23.46521739130435
      } \node[\nodetype={\param}{}] (\nodename) at (\xpos, \ypos) {};
    \end{scope}
    \begin{scope} 
      \path (6) edge[conn] (10);
      \path (9) edge[conn] (12);
      \path (4) edge[conn] (9);
      \path (4) edge[conn] (8);
      \path (7) edge[conn] (10);
      \path (7) edge[conn] (11);
      \path (14) edge[conn] (15);
      \path (8) edge[conn] (12);
      \path (8) edge[conn] (11);
      \path (2) edge[conn] (6);
      \path (2) edge[conn] (7);
      \path (10) edge[conn] (13);
      \path (1) edge[conn] (7);
      \path (1) edge[conn] (8);
      \path (12) edge[conn] (14);
      \path (0) edge[conn] (4);
      \path (0) edge[conn] (2);
      \path (0) edge[conn] (1);
      \path (0) edge[conn] (5);
      \path (0) edge[conn] (3);
      \path (5) edge[conn] (9);
      \path (13) edge[conn] (15);
      \path (3) edge[conn] (6);
      \path (11) edge[conn] (14);
      \path (11) edge[conn] (13);
    \end{scope}
    \begin{scope} 
      \foreach \nodename/\labelpos/\labelopts/\labelcontent in {%
        0/above/b/{steamed fish, fried potatoes, fried fish, risotto/paella, leafy salad, omlette, potato casserole/gratin, vegetable casserole/gratin, tomato based pasta sauces, pizza, desserts, puree, cheese cookies, broccoli, grilled fish, potato soup, asian rice dish, baked fish, rice pudding, oven potatoes, cauliflower, vegetable soup/minestrone, fruit salad, cucumber/salad, beans, curried rice},
        1/above/o/{goulash, veal},
        2/above/o/{goose},
        3/above/o/{carrots, light sauces, spinach, punch, cake, compote/jam, punch/tea, christmas cookies, chicken/turkey.},
        4/above/l/{herb curd/dips, mushrooms, pea/bean/lentil soup, tomato/salad},
        5/above/r/{crustaceans/shellfish, sauerkraut},
        6/above/o/{duck},
        8/above/o/{beef, roulades},
        9/above/o/{red cabbage, preserves, dark sauces, kohlrabi},
        10/above/o/{minced meat, pork},
        12/above/o/{sauerbraten},
        13/above/o/{lamb meat},
        14/above/o/{game meat},
        1/below/o/{\textbf{meat (group)}},
        2/below/o/{\textbf{mugwort}},
        3/below/o/{\textbf{ginger}},
        4/below/o/{\textbf{black pepper}},
        5/below/o/{\textbf{juniper berries}}
      } \coordinate[label={[\labelopts]\labelpos:{\labelcontent}}](c) at (\nodename);
    \end{scope}
  \end{scope}
\end{tikzpicture}

  \caption{The largest local full interordinal scale-measure of the
    spices data set.  We employed the dual context to get conceptual
    explanations of the attributes (spices). The attributes that
    induce the local full scale-measure are highlighted with bold
    font.}
  \label{fig:largest-interordinal-fsm}
\end{figure}

\begin{figure}
  \centering
  \colorlet{mivertexcolor}{black!80}
\colorlet{jivertexcolor}{black!80}
\colorlet{vertexcolor}{black!80}
\colorlet{bordercolor}{black!80}
\colorlet{linecolor}{gray}
\tikzset{vertexbase/.style 2 args={semithick, shape=circle, inner sep=2pt, outer sep=0pt, draw=bordercolor},%
  vertex/.style 2 args={vertexbase={#1}{}, fill=vertexcolor!45},%
  mivertex/.style 2 args={vertexbase={#1}{}, fill=mivertexcolor!45},%
  jivertex/.style 2 args={vertexbase={#1}{}, fill=jivertexcolor!45},%
  divertex/.style 2 args={vertexbase={#1}{}, top color=mivertexcolor!45, bottom color=jivertexcolor!45},%
  conn/.style={-, thick, color=linecolor}%
}
\tikzstyle{c} = [text width=3cm, align=center, font=\tiny]
\tikzstyle{c2} = [text width=1.5cm, align=center, font=\tiny]
\tikzstyle{r} = [text width=3cm, align=right, font=\tiny]
\tikzstyle{r2} = [text width=1.5cm, align=right, font=\tiny]
\tikzstyle{l} = [text width=3cm, align=left, font=\tiny]
\tikzstyle{l2} = [text width=1.5cm, align=left, font=\tiny]
\tikzstyle{b} = [text width=10cm, align=center, font=\tiny]
\begin{tikzpicture}[scale=0.11]
  \begin{scope} 
    \begin{scope} 
      \foreach \nodename/\nodetype/\param/\xpos/\ypos in {%
        0/vertex//50.989966626992484/3.6470629419589358,
        1/jivertex//50.583082919088014/16.396085789633318,
        2/jivertex//62.11145464304887/16.667341594902993,
        3/jivertex//39.19033909776194/16.802969497537816,
        4/vertex//62.65396625358822/30.501387663656047,
        5/vertex//50.447455016453176/31.043899274195383,
        6/vertex//39.054711195127105/31.315155079465057,
        7/jivertex//90.32205839109437/32.2645503979089,
        8/jivertex//7.819835235361435/36.09193120026133,
        9/vertex//50.447455016453176/42.843526803425945,
        10/vertex//102.25731382295976/43.79292212186977,
        11/vertex//90.59331419636402/44.199805829774284,
        12/vertex//78.5224308618638/44.471061635043945,
        13/vertex//19.020607248710682/46.5413829961025,
        14/vertex//7.627863427384632/46.948266704007,
        15/vertex//-3.9005082965762554/47.21952250927667,
        16/vertex//102.79982543349908/55.99943335900483,
        17/vertex//91.07151345422297/56.62994328922494,
        18/vertex//79.06494247240315/57.35571238535317,
        19/vertex//50.53140096618357/59.29951690821256,
        20/vertex//-3.9005082965762554/59.83291745431623,
        21/vertex//19.15623515134552/59.83291745431623,
        22/vertex//7.394714462414573/60.748936031179205,
        23/mivertex//90.8645700016337/69.15533991458372,
        24/vertex//38.946859903381636/70.24637681159422,
        25/vertex//50.21256038647343/70.8840579710145,
        26/vertex//61.58454106280193/71.09661835748793,
        27/mivertex//7.182154075941142/72.54603748045457,
        28/mivertex//38.888888888888886/80.0,
        29/mivertex//50.0/80.0,
        30/mivertex//61.111111111111114/80.0,
        31/vertex//49.49805969800934/91.53394384933134
      } \node[\nodetype={\param}{}] (\nodename) at (\xpos, \ypos) {};
    \end{scope}
    \begin{scope} 
      \path (18) edge[conn] (23);
      \path (9) edge[conn] (27);
      \path (2) edge[conn] (4);
      \path (10) edge[conn] (17);
      \path (7) edge[conn] (10);
      \path (25) edge[conn] (30);
      \path (13) edge[conn] (21);
      \path (19) edge[conn] (26);
      \path (7) edge[conn] (19);
      \path (7) edge[conn] (11);
      \path (29) edge[conn] (31);
      \path (2) edge[conn] (10);
      \path (15) edge[conn] (24);
      \path (0) edge[conn] (2);
      \path (6) edge[conn] (9);
      \path (9) edge[conn] (23);
      \path (8) edge[conn] (19);
      \path (25) edge[conn] (28);
      \path (26) edge[conn] (30);
      \path (17) edge[conn] (29);
      \path (2) edge[conn] (5);
      \path (17) edge[conn] (23);
      \path (14) edge[conn] (25);
      \path (0) edge[conn] (8);
      \path (11) edge[conn] (18);
      \path (3) edge[conn] (15);
      \path (12) edge[conn] (18);
      \path (13) edge[conn] (22);
      \path (5) edge[conn] (22);
      \path (20) edge[conn] (28);
      \path (4) edge[conn] (21);
      \path (3) edge[conn] (6);
      \path (1) edge[conn] (14);
      \path (24) edge[conn] (28);
      \path (23) edge[conn] (31);
      \path (18) edge[conn] (28);
      \path (26) edge[conn] (29);
      \path (5) edge[conn] (17);
      \path (13) edge[conn] (26);
      \path (4) edge[conn] (16);
      \path (27) edge[conn] (31);
      \path (4) edge[conn] (9);
      \path (19) edge[conn] (24);
      \path (1) edge[conn] (6);
      \path (10) edge[conn] (26);
      \path (12) edge[conn] (17);
      \path (21) edge[conn] (30);
      \path (3) edge[conn] (12);
      \path (20) edge[conn] (27);
      \path (15) edge[conn] (22);
      \path (22) edge[conn] (29);
      \path (30) edge[conn] (31);
      \path (21) edge[conn] (27);
      \path (11) edge[conn] (16);
      \path (1) edge[conn] (4);
      \path (8) edge[conn] (14);
      \path (10) edge[conn] (16);
      \path (24) edge[conn] (29);
      \path (1) edge[conn] (11);
      \path (0) edge[conn] (3);
      \path (8) edge[conn] (13);
      \path (0) edge[conn] (7);
      \path (6) edge[conn] (18);
      \path (19) edge[conn] (25);
      \path (16) edge[conn] (23);
      \path (3) edge[conn] (5);
      \path (11) edge[conn] (25);
      \path (0) edge[conn] (1);
      \path (8) edge[conn] (15);
      \path (5) edge[conn] (9);
      \path (15) edge[conn] (20);
      \path (7) edge[conn] (12);
      \path (14) edge[conn] (21);
      \path (12) edge[conn] (24);
      \path (28) edge[conn] (31);
      \path (22) edge[conn] (27);
      \path (16) edge[conn] (30);
      \path (2) edge[conn] (13);
      \path (14) edge[conn] (20);
      \path (6) edge[conn] (20);
    \end{scope}
    \begin{scope} 
      \foreach \nodename/\labelpos/\labelopts/\labelcontent in {%
        0/above/b/{carrots, leaf salad, spinach, punch, desserts, puree, cake, potato soup, asian rice table, crustaceans/shellfish, baked fish, rice pudding, compote/jam, cauliflower, punch/tea, fruit salad, christmas cookies, cucumbers/salad, goose, beans, curried rice.},
        1/above/c2/{red cabbage, sauerbraten},
        2/above/r2/{broccoli, sauerkraut},
        4/above/c/{pea/bean/lentil soup, kohlrabi},
        6/above/c/{tomato/-salat},
        7/above/c/{steamed fish, omlette, light sauces, grilled fish, veal},
        8/above/c/{risotto/paella, minced meat, chicken/turkey},
        10/above/r2/{potato casserole / gratin},
        11/above/c/{game meat},
        12/above/c/{fried fish},
        13/above right/l2/{goulash, cheese pastry},
        14/above/c/{roulades},
        15/above/l2/{vegetable casserole gratin},
        19/above/c/{fried potatoes, duck, pork},
        23/above/c/{lamb leat},
        24/above/l2/{tomato based pasta sauces, pizza},
        25/above/c2/{beef, mushrooms},
        26/above right/l/{oven potato},
        27/above/c/{herb curd/dips},
        28/above/l2/{stew},
        29/above/c2/{vegetable soup},
        30/above/r2/{dark sauce},
        1/below/c2/{\textbf{black pepper}},
        2/below right/l/{\textbf{caraway}},
        3/below left/r/{\textbf{oregano}},
        7/below/c/{\textbf{thyme}},
        8/below/c2/{\textbf{sweet paprika}}
      } \coordinate[label={[\labelopts]\labelpos:{\labelcontent}}](c) at (\nodename);
    \end{scope}
  \end{scope}
\end{tikzpicture}

  \caption{The largest local full contranominal scale-measure of the
    spices data set. We employed the dual context to get conceptual
    explanations of the attributes (spices). The attributes that
    induce the local full scale-measure are highlighted with bold
    font.}
  \label{fig:largest-contranominal-fsm}
\end{figure}

\begin{figure}
  \centering
  \colorlet{mivertexcolor}{black!80}
\colorlet{jivertexcolor}{black!80}
\colorlet{vertexcolor}{black!80}
\colorlet{bordercolor}{black!80}
\colorlet{linecolor}{gray}
\tikzset{vertexbase/.style 2 args={semithick, shape=circle, inner sep=2pt, outer sep=0pt, draw=bordercolor},%
  vertex/.style 2 args={vertexbase={#1}{}, fill=vertexcolor!45},%
  mivertex/.style 2 args={vertexbase={#1}{}, fill=mivertexcolor!45},%
  jivertex/.style 2 args={vertexbase={#1}{}, fill=jivertexcolor!45},%
  divertex/.style 2 args={vertexbase={#1}{}, top color=mivertexcolor!45, bottom color=jivertexcolor!45},%
  conn/.style={-, thick, color=linecolor}%
}
\tikzstyle{c} = [text width=3cm, align=center, font=\tiny]
\tikzstyle{c2} = [text width=1.5cm, align=center, font=\tiny]
\tikzstyle{r} = [text width=3cm, align=right, font=\tiny]
\tikzstyle{r2} = [text width=1.5cm, align=right, font=\tiny]
\tikzstyle{l} = [text width=4cm, align=left, font=\tiny]
\tikzstyle{l2} = [text width=1.5cm, align=left, font=\tiny]
\tikzstyle{b} = [text width=8cm, align=right, font=\tiny]
\begin{tikzpicture}[xscale=2.5,yscale=2,rotate=90]
  \begin{scope} 
    \begin{scope} 
      \foreach \nodename/\nodetype/\param/\xpos/\ypos in {%
        0/vertex//2.3/1,
        1/jivertex//1.8/2,
        2/jivertex//2.8/2,
        3/jivertex//0/2,
        4/jivertex//3.8/2,
        5/jivertex//4.9/2,
        6/jivertex//0.86/2,
        7/mivertex//4.9/3,
        8/mivertex//3.8/3,
        9/mivertex//2.8/3,
        10/mivertex//0/3,
        11/mivertex//1.8/3,
        12/mivertex//0.86/3,
        13/vertex//2.3/4
      } \node[\nodetype={\param}{}] (\nodename) at (\xpos, \ypos) {};
    \end{scope}
    \begin{scope} 
      \path (0) edge[conn] (1);
      \path (1) edge[conn] (9);
      \path (5) edge[conn] (10);
      \path (3) edge[conn] (10);
      \path (0) edge[conn] (5);
      \path (0) edge[conn] (3);
      \path (4) edge[conn] (7);
      \path (2) edge[conn] (9);
      \path (7) edge[conn] (13);
      \path (5) edge[conn] (7);
      \path (1) edge[conn] (11);
      \path (2) edge[conn] (8);
      \path (12) edge[conn] (13);
      \path (8) edge[conn] (13);
      \path (0) edge[conn] (4);
      \path (9) edge[conn] (13);
      \path (6) edge[conn] (12);
      \path (3) edge[conn] (12);
      \path (6) edge[conn] (11);
      \path (4) edge[conn] (8);
      \path (0) edge[conn] (6);
      \path (11) edge[conn] (13);
      \path (10) edge[conn] (13);
      \path (0) edge[conn] (2);
    \end{scope}
    \begin{scope} 
      \foreach \nodename/\labelpos/\labelopts/\labelcontent in {%
        0/above/b/{beef, roast potatoes, carrots, omlettes, spinach, mashed potatoes, mushrooms, potato soup, roulades, cauliflower, cucumbers/salad, beans, curry rice, chicken/turkey},
        1/above right/l/{crustaceans/shellfish, baked fish},
        2/above right/l/{asian rice table, rice pudding, fruit salad},
        3/above right/l/{goulash, cheese cookies, broccoli, baked potato, pea/bean/lentil soup, vegetable soup/minestrone, dark sauces, sauerkraut, kohlrabi},
        4/above right/l/{red cabbage, sauerbraten, veal},
        5/above right/l/{minced meat, goose},
        6/above right/l/{risotto/paella, leaf salad, light sauces, vegetable casserole/gratin, tomato-based pasta sauces, pizza, potstickers, tomato/salad},
        7/above/c/{duck, pork, game},
        8/above/c/{punch, desserts, cakes, compote/jam, punch/tea, christmas cookies},
        9/above/c/{steamed fish},
        10/above/c/{lamb meat},
        11/above/c/{fried fish, grilled fish},
        12/above/c/{herb curd/dips, potato casserole/gratin},
        1/below/c/{\textbf{fish (group)}},
        2/below/c/{\textbf{anis}},
        3/below/c/{\textbf{caraway}},
        4/below/c/{\textbf{cloves}},
        5/below/c/{\textbf{mugwort}},
        6/below/c/{\textbf{basil}}
      } \coordinate[label={[\labelopts]\labelpos:{\labelcontent}}](c) at (\nodename);
    \end{scope}
  \end{scope}
\end{tikzpicture}

  \caption{The largest local full crown scale-measure of the spices
    data set. We employed the dual context to get conceptual
    explanations of the attributes (spices). The attributes that
    induce the local full scale-measure are highlighted with bold
    font. The diagram was rotated by 90 degrees counter clockwise to
    improve readability, i.e., the top concept is on the left.}
  \label{fig:largest-crown-fsm}
\end{figure}

\end{document}